\newtheorem{theorem}{Theorem}
\newtheorem{corollary}{Corollary}[theorem]
\colorlet{LightGray}{White!98!Periwinkle}
\definecolor{LightCyan}{rgb}{0.75,1,1}
\declaretheoremstyle[
    name=Hypothesis,
]{thmsty}
\declaretheorem[style=thmsty]{hypothesis}
\def\mystrut(#1,#2){\vrule height #1pt depth #2pt width 0pt}   
\definecolor{newgreen}{RGB}{128,187,116}
\definecolor{neworange}{RGB}{198,154,89}
\definecolor{newblue}{RGB}{125,206,219}
\definecolor{matplotlibblue}{RGB}{0, 0, 255}
\definecolor{matplotliborange}{HTML}{ff7f0e}
\definecolor{matplotlibred}{HTML}{d62728}
\definecolor{matplotlibgreen}{HTML}{2ca02c}
\newcommand{\orangetext}[1]
{\colorbox{matplotliborange!80}{\mystrut(.5, .5) #1}}
\newcommand{\bluetext}[1]{\colorbox{newblue}{\mystrut(.5, .5) #1}}
\newcommand{\greentext}[1]{\colorbox{newgreen}{\mystrut(.5, .5) #1}}
\newcommand{\name}{\textsc{Cooldown}}
\title{\textit{Upsample} or \textit{Upweight?} \\
Balanced Training on Heavily Imbalanced Datasets}
\author{Tianjian Li, Haoran Xu, Weiting Tan \\ \textbf{Kenton Murray, Daniel Khashabi} \\
Center for Language and Speech Processing \\
Johns Hopkins University \\
\texttt{tli104@jhu.edu}}
\begin{document}
\maketitle
\begin{abstract}
Data abundance across different domains exhibits a long-tailed distribution: few domains have abundant data, while most face data scarcity. Our work focuses on a multilingual setting, where available data is heavily skewed towards high-resource languages. Two common strategies to address this disparity are upsampling low-resource data (\textit{Temperature Sampling}) and upweighting low-resource loss (\textit{Scalarization}). These methods are often assumed to be equivalent, but this equivalence has not been rigorously established, prompting our investigation.

Through theoretical and empirical analysis, we identify when these two methods are equivalent and when they diverge. We prove that they are equivalent under \emph{full} gradient descent but differ under \emph{stochastic} gradient descent due to differences in gradient variance. Specifically, \textit{Temperature Sampling} exhibits lower variance in gradient estimation compared to \textit{Scalarization}, leading to faster convergence but a higher risk of overfitting. Based on these insights, we propose \name{}, a strategy that starts by heavily upsampling low-resource languages to accelerate convergence and gradually reduces the upsampling to prevent overfitting—achieving the best of both worlds. Our method competes effectively with existing data re-weighting techniques while offering computational efficiency.
\end{abstract}

% reducne the margin around the color box 
\setlength{\fboxsep}{1pt}

\section{Introduction}

%\tianjian{Citations: https://arxiv.org/pdf/2410.11820, https://arxiv.org/abs/2405.14908, https://arxiv.org/abs/2410.02498, https://arxiv.org/abs/2410.12883}
%\tianjian{There exists heavy mismatch in sizes of different domains (long-tailed domains) $\rightarrow$ balancing them creates a challenge for training LLMs $\rightarrow$ definition of ``domains" and we select a natural occurring domain of ``language" and study in the multilingual setting}

%Natural language corpora consist of many domains \citep{JMLR:v24:22-1144, pmlr-v202-kandpal23a}.
Information on the internet ranges from common knowledge, such as famous landmarks, to rare details, such as local folklore and specialized scientific theories. Data availability across different domains is often long-tailed \citep{10.1145/3357713.3384290, 10.5555/3495724.3495966, pmlr-v202-kandpal23a}, where very few domains have abundant data. However, the standard language model training objective treats each training instance equally, putting no emphasis on domains that suffer from data scarcity. This heavy mismatch in dataset sizes creates substantial challenges in training language models to be competent in all domains.

\begin{figure}[t]
    \centering
    \includegraphics[scale=0.5,trim=0.6cm 0.23cm 0.3cm 1.3cm,clip=true]{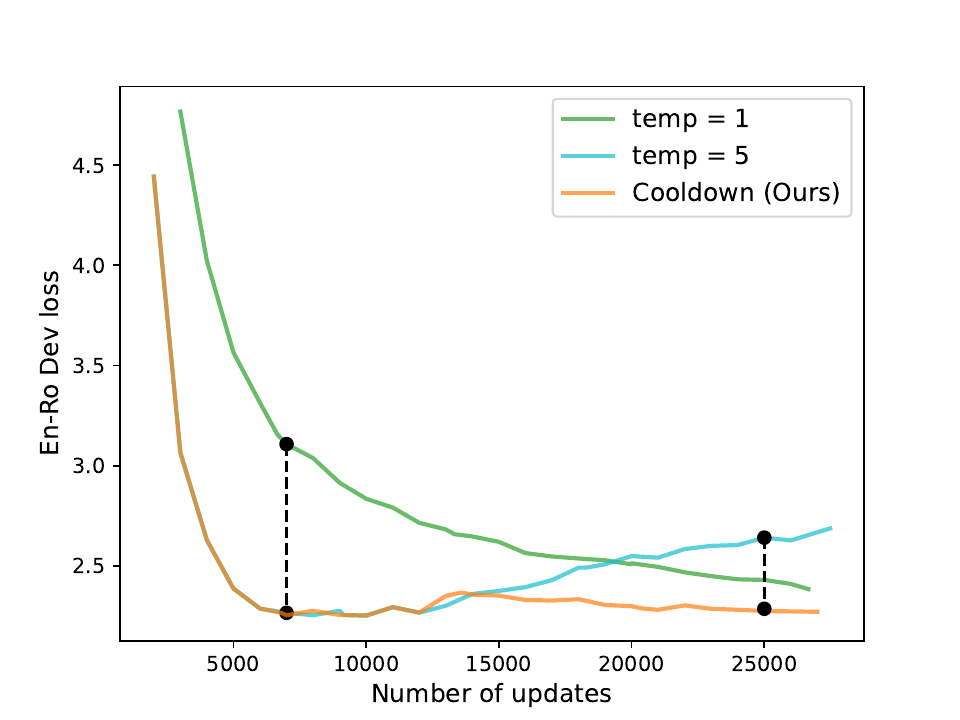}
        \begin{tikzpicture}[overlay]
        \node[font=\tiny, text width=1.7cm, yshift=2.8cm, xshift=3.15cm] (first) {\name~reduces overfitting compared to upsampling (temp=5)};
        \begin{pgfonlayer}{background}
            \node[fit=(first),inner sep=0mm,rectangle callout,
            callout relative pointer={(-0.2,-0.3)},
            rounded corners=2pt,draw,fill=gray!10,overlay] {};
        \end{pgfonlayer}
    \end{tikzpicture}
    \begin{tikzpicture}[overlay]
        \node[font=\tiny, text width=1.7cm, yshift=3.7cm, xshift=-0.8cm] (first) {\name~converges $\sim$5$\times$ faster compared to proportional sampling (temp=1).};
        \begin{pgfonlayer}{background}
            \node[fit=(first),inner sep=0mm,rectangle callout,
            callout relative pointer={(-0.2,-0.3)},
            rounded corners=2pt,draw,fill=gray!10,overlay] {};
        \end{pgfonlayer}
    \end{tikzpicture}
    \caption{Validation loss by training iteration of a low-resource language pair (\texttt{En-Ro}) in multilingual machine translation. \greentext{Proportional sampling} leads to underfitting the low-resource direction. Using a \bluetext{high temperature} (oversampling LRLs) leads to overfitting the low-resource direction. Employing a high temperature at the beginning and then decreasing the temperature (\orangetext{\name})~gets the advantage of fast convergence without overfitting. }
    \label{fig:teaser}
\end{figure}

This work focuses on language modeling on a natural divide of domains with a heavy mismatch: different languages in multilingual language modeling.  
Multilingual language models are often trained on corpora with an overwhelming amount of English and other high-resource languages (HRLs) and tiny amounts of data for low-resource languages (LRLs) \citep{koehn-knowles-2017-six, conneau-etal-2020-unsupervised, xue-etal-2021-mt5}. For example, the multilingual C4 corpus \citep{xue-etal-2021-mt5} contains 2733 billion English tokens but only 1 billion Swahili tokens. Uniformly sampling from the combined dataset would result in the language model optimized heavily towards performance on HRLs (e.g. English), sacrificing performance on LRLs (e.g. Swahili).

Two methods are often employed to address domain mismatches: \textbf{Scalarization} and \textbf{Temperature Sampling}. Scalarization adjusts the losses for individual domains by re-weighting them under uniform sampling \citep{zhou-etal-2021-distributionally, choi2023order}. In this case, we assign a larger weight to LRLs to emphasize their importance. Temperature Sampling weights each training instance uniformly and handles the mismatch by over-sampling LRLs and/or down-sampling HRLs \citep{aharoni-etal-2019-massively, wang-etal-2020-balancing, chung2023unimax, xue-etal-2021-mt5}. Intuitively, Scalarization modifies the loss while Temperature Sampling modifies the dataset. %\tianjian{Maybe mention in this case, we study a special case of weighted sampling: Temperature Sampling.}
%\tianjian{Talk about the common misconception that S equals ST to motivate experiments, direct quotes on openreview.}
Scalarization and Temperature Sampling are widely regarded as equivalent. \citet{choi2023order} denotes \textit{``we follow convention and implement Scalarization via proportional sampling''}. \citet{xie2023doremi} and \citet{fan2024doge} implement sampling probabilities by multiplying losses with per-domain re-normalized weights. The underlying assumption is that Temperature Sampling and Scalarization are equivalent, and we can use them interchangeably. However, to the best of our knowledge, this equivalence has not been rigorously established.

We closely investigate this assumed equivalency in theory (\S \ref{theoretical}). Specifically, we prove that although they are equivalent in \textit{full} gradient descent (Theorem \ref{theorem:1}), Temperature Sampling induces lower variance in the context of  \textit{stochastic} gradient descent (Theorem \ref{theorem:2}). Moreover, the variance induced by scalarization increases as the approximated temperature increases or the domain distribution's skewness increases (Theorem \ref{theorem:3}). Based on our theoretical results and connecting to the literature on lower variance between stochastic gradients accelerates convergence \cite{pmlr-v28-sutskever13, mccandlish2018empiricalmodellargebatchtraining}, we make the following hypothesis:
\vspace{-0.2cm}
\begin{hypothesis}
    \label{hypothesis1}
    Temperature Sampling converges much faster than Scalarization at higher temperatures or on heavily imbalanced domain distributions.
    %Temperature Sampling (TS) and Scalarization (S) are \emph{not} equivalent in general and they enjoy distinct statistical properties. 
\end{hypothesis}
We empirically verify our hypothesis (\S\ref{section3}) and find that Temperature Sampling does converge faster but is more prone to overfitting. We identify that the temperature controls the speed of convergence and hence can be used as a control knob to adjust the convergence speed. We thus propose \name: to use a large temperature initially for fast convergence, then decrease the temperature to prevent overfitting to the LRLs. Figure \ref{fig:teaser} illustrates the effectiveness of \orangetext{\name}, which significantly accelerates convergence on the LRL due to a high temperature (aggressive upsampling of LRLs) at the beginning of training and reduces overfitting to the LRL due to the lowering the temperature during training. 

To sum up, our contribution is two-fold:
\begin{itemize}[leftmargin=*]
    \item We inspect Scalarization and Temperature Sampling both theoretically and empirically (\S 3). Contrary to existing work that uses them interchangeably, we found that Temperature Sampling converges faster due to a lower variance in stochastic gradient estimation.
    \item Motivated by our findings, we propose \name, a method to adjust the sampling temperature during training on unbalanced datasets. We show the effectiveness of \name~ in multilingual settings.
\end{itemize}

\section{Preliminaries}
\label{section2}
\subsection{Notations and Task Description} We consider a model trained on a collection of data $\mathcal{D} = \{x\}_{i=1}^N$ from $K$ domains $\mathcal{D} = \mathcal{D}_1 \cup \mathcal{D}_2 \cup  ... \cup \mathcal{D}_{K}$. Here, ``domain" refers to sources (Books, Wikipedia, code) for general language modeling or different languages (English, French, Swahili) in multilingual language modeling. The total training loss $\mathcal{J}(\mathcal{D})$ is the sum of the losses of each example $\mathcal{L}(x)$. 

$$\mathcal{J}(\mathcal{D}) = \sum_{x \in \mathcal{D}} \mathcal{L}(x).$$

\paragraph{Scalarization (S)} Naive aggregation often results in imbalanced performance across domains when high-resource domains dominate the aggregated loss. Scalarization solves this issue by assigning weights $\mathbf{w} = \{w_{i}\}_{i=1}^K$ to each domain and aggregates the weighted sum of individual losses:
$$ \mathcal{L}_{S}(\mathbf{w}) = \mathbb{E}_{x \in \mathcal{D}} \left[w_{f(x)} \mathcal{L}(x)\right],$$
where $f: \mathcal{D} \rightarrow [K]$ maps a training example to the index of its domain.
Scalarization balances the loss by assigning a higher weight to harder or low-resource domains.
\paragraph{Temperature Sampling (TS)} Instead of assigning weights to losses, we can also sample more frequently from the low-resource domain to achieve balanced training. Temperature sampling achieves this by adjusting the probabilities of selecting instances from different domains based on their sizes. The sampling probability vector $p$ of each domain is given by: 
$$ \forall i \in \{1, 2, ..., K\}: \; p(i; \tau) = \frac{|\mathcal{D}_i|^\frac{1}{\tau}}{\sum_{j=1}^K |\mathcal{D}_j|^\frac{1}{\tau}},$$
where $\tau$ is the sampling temperature, a hyperparameter controlling the sampling weights. $\tau=1$ means that we are sampling proportional to the sizes of each domain. As we increase $\tau$, we increase the sampling probability of low-resource domains. The loss for Temperature Sampling is:

$$
\mathcal{L}_{TS}(\tau) = \mathop{\mathbb{E}}_{\substack{k \sim p(\cdot ; \tau) \\ x \sim \mathcal{D}_k}} \Big[ \mathcal{L}(x) \Big]
$$
The common understanding is that Temperature Sampling is mathematically equivalent to Scalarization \citep{choi2023order, NEURIPS2022_580c4ec4}, and we can use them interchangeably \citep{choi2023order}. In the next subsection, we will formalize this statement and show that these two are mathematically equivalent in \textit{full} gradient descent.
%\daniel{Reworded it:}
We will then show that they are \underline{not} equivalent under \emph{stochastic} gradient descent.
% Then we will also show their in-equivalence under \emph{stochastic} gradient descent.

\section{Temperature Sampling v.s. Scalarization}

\subsection{Theoretical Analysis}
\label{theoretical}
We formalize the equivalence of Scalarization and weighted sampling under full-gradient descent (\autoref{theorem:1}) and show that Scalarization induces a larger variance between the mini-batch losses (\autoref{theorem:2}). Furthermore, when using Scalarization to approximate Temperature Sampling, the variance increases as the temperature rises (\autoref{theorem:3}).

%\textcolor{red}{(Trivial) Scalarization = Temp Sampling (loss expectation)}

\begin{theorem}[Equivalency under Gradient Descent]
\label{theorem:1}
For any sampling temperature $\tau$, there exists a set of weights $\mathbf{w}_\tau = \{w_1, w_2, ..., w_K\}$ for the Scalarization loss such that this loss is equivalent to 
the Temperature Sampling loss, both computed based on the whole data $\mathcal{D}$.
%\daniel{\st{such that the total loss of sampling is equivalent to the total Scalarization loss.}
% moving the mention of the "weights" appears next to Scalarization: 
%for the Scalarization loss such that this loss is equivalent to 
%the Temperature Sampling loss, both computed based on the whole data $D$.
%}
\end{theorem}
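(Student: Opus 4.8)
The plan is to reduce both objectives to linear combinations of the same per-domain quantities and then simply match coefficients. Write $n_k = |\mathcal{D}_k|$, $N = |\mathcal{D}| = \sum_{k} n_k$, and let $S_k = \sum_{x \in \mathcal{D}_k} \mathcal{L}(x)$ be the total loss accumulated on domain $k$ (as a function of the model parameters). The key observation is that, once their expectations are expanded over the whole of $\mathcal{D}$, both $\mathcal{L}_S(\mathbf{w})$ and $\mathcal{L}_{TS}(\tau)$ are linear functionals of the vector $(S_1, \dots, S_K)$. The theorem then follows by choosing $\mathbf{w}_\tau$ so that these two functionals coincide coefficient-by-coefficient, which immediately forces the loss functions — and hence their full-batch gradients — to agree.

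First I would expand the Scalarization loss. Reading $\mathbb{E}_{x \in \mathcal{D}}$ as the uniform average over the $N$ examples, every instance of $\mathcal{D}_k$ carries the same weight $w_k$, so $\mathcal{L}_S(\mathbf{w}) = \frac{1}{N}\sum_{x \in \mathcal{D}} w_{f(x)}\mathcal{L}(x) = \frac{1}{N}\sum_{k=1}^K w_k S_k$. Next I would expand the Temperature Sampling loss by conditioning on the sampled domain: since the inner draw $x \sim \mathcal{D}_k$ is uniform over $n_k$ elements, $\mathcal{L}_{TS}(\tau) = \sum_{k=1}^K p(k;\tau)\,\mathbb{E}_{x \sim \mathcal{D}_k}[\mathcal{L}(x)] = \sum_{k=1}^K \frac{p(k;\tau)}{n_k} S_k$. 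Equating the coefficient of each $S_k$ gives $\frac{w_k}{N} = \frac{p(k;\tau)}{n_k}$, that is $w_k = N\,\frac{p(k;\tau)}{n_k} = \frac{N\, n_k^{1/\tau - 1}}{\sum_{j} n_j^{1/\tau}}$. These weights are well defined and strictly positive because each $n_k \ge 1$, so the promised set $\mathbf{w}_\tau$ exists.

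With this choice the two losses are literally equal as functions of the parameters, so their gradients agree everywhere and full-batch gradient descent produces identical trajectories from any shared initialization. I expect no serious obstacle: the only care needed is bookkeeping of the two different normalizations — the global $1/N$ in Scalarization versus the per-domain $1/n_k$ arising from the inner uniform draw in Temperature Sampling — which is precisely what the factor $n_k^{1/\tau - 1}$ reconciles. If one is content to call the objectives ``equivalent'' only up to a rescaling of the learning rate, the prefactor $N$ can be dropped, leaving $w_k \propto n_k^{1/\tau - 1}$ and making $\mathcal{L}_S$ a positive scalar multiple of $\mathcal{L}_{TS}$, with identical minimizers and gradient direction. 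The substantive content — that this exact correspondence fails once gradients are estimated on mini-batches — is what Theorem \ref{theorem:2} addresses, not this statement.
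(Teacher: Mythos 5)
Your proposal is correct and matches the paper's proof in substance: your coefficient-matching over the per-domain totals $S_k$ yields exactly the paper's weights $w_k = \frac{N\,p(k;\tau)}{n_k} = \frac{p(k;\tau)}{p(k;1)}$, and your expansion of both expectations is the same chain of equalities the paper writes out. The only cosmetic difference is that you derive the weights by equating coefficients while the paper states them upfront and verifies.
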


\begin{proof}
    For all $i \in \{1, 2, 3,..., K\}$, let: 
    $$w_i = \frac{\sum_{j=1}^K |\mathcal{D}_j|}{|\mathcal{D}_i|} \cdot \frac{|\mathcal{D}_i|^\frac{1}{\tau}}{\sum_{j=1}^K |\mathcal{D}_j|^\frac{1}{\tau}} = \frac{p(i; \tau)}{p(i; 1)},$$ 
    \begin{align*}
    \mathcal{L}_{S}(\mathbf{w}_\tau) &= \mathop{\mathbb{E}_{x \sim \mathcal{D}}}\left[w_{f(x)}\mathcal{L}(x)\right] \\
    &= \sum_{x \in \mathcal{D}} \frac{p(f(x);1)}{|\mathcal{D}_{f(x)}|}w_{f(x)} \mathcal{L}(x) \\ &=  \sum_{i = 1}^K p(i; \tau)\sum_{x \in \mathcal{D}_i} \frac{1}{|\mathcal{D}_i|}\mathcal{L}(x) \\ &= \mathop{\mathbb{E}}_{i \sim p(\cdot; \tau)} \left[\sum_{x \in \mathcal{D}_i} \frac{1}{|\mathcal{D}_i|}\mathcal{L}(x)\right] \\ &=  \mathbb{E}_{i \sim p(\cdot;\tau)} [\mathbb{E}_{x \sim \mathcal{D}_i} [\mathcal{L}(x)]]=\mathcal{L}_{TS}(\tau).
    \end{align*}
\end{proof}

Intuitively, this suggests that in the context of \textit{full} gradient descent, the loss remains the same whether you multiply the loss of a single data point by 2 (S) or duplicate the data point (TS). We will then show that Scalarization induces a larger variance in \textit{stochastic} gradient estimation \citep{Robbins1951ASA} compared to Temperature Sampling.

%\textcolor{red}{(Non-Trivial) Scalarization has higher variance when using SGD.}
\begin{corollary}
    For any $\tau$, let $\mathbf{w}_\tau$ be the set of weights such that $\mathcal{L}_{TS}(\tau) = \mathcal{L}_{S}(\mathbf{w}_\tau)$. Let $\nabla \mathcal{L}(x)$ be the gradient with respect to a single datapoint $x$. We denote the stochastic gradient under Scalarization as $\nabla \mathcal{L}_{S}(x;\mathbf{w}_\tau) = \{\nabla w_{f(x)} \mathcal{L}(x)| x\sim\mathcal{D}\}$, which is the gradient $\nabla w_{f(x)}\mathcal{L}(x)$ when a single sample $x$ is uniformly drawn from the dataset $\mathcal{D}$. Similarly, we denote the stochastic gradient under Temperature Sampling as $\nabla \mathcal{L}_{TS}(x; \tau) = \{ \nabla \mathcal{L}(x) | x \sim \mathcal{D}_i, i \sim p(i; \tau)\}$. Then both $\nabla \mathcal{L}_{S}(x; \mathbf{w}_\tau)$ and $\nabla \mathcal{L}_{TS}(x; \tau)$ are unbiased estimates of the total gradient.
\end{corollary}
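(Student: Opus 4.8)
The plan is to reduce the statement to two short expectation computations, one for each estimator, and then invoke Theorem~\ref{theorem:1} to identify their common population gradient. First I would fix the notion of the \emph{total gradient}: since $\mathcal{L}_{S}(\mathbf{w}_\tau)$ and $\mathcal{L}_{TS}(\tau)$ are equal as functions of the model parameters by Theorem~\ref{theorem:1}, and differentiation is linear, their gradients coincide. I would write this common quantity as $g := \nabla \mathcal{L}_{TS}(\tau) = \nabla \mathcal{L}_{S}(\mathbf{w}_\tau)$ and take it to be the target total gradient that both stochastic estimators must match in expectation.

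For the Scalarization estimator, the sampling rule draws a single $x$ uniformly from $\mathcal{D}$, so I would compute $\mathbb{E}_{x \sim \mathcal{D}}\left[\nabla w_{f(x)} \mathcal{L}(x)\right] = \frac{1}{|\mathcal{D}|}\sum_{x\in\mathcal{D}} w_{f(x)}\nabla\mathcal{L}(x)$. Because $\mathcal{D}$ is finite, the gradient commutes with the finite sum, so this equals $\nabla \mathbb{E}_{x\sim\mathcal{D}}\left[w_{f(x)}\mathcal{L}(x)\right] = \nabla\mathcal{L}_{S}(\mathbf{w}_\tau) = g$, establishing unbiasedness for the first estimator.

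For the Temperature Sampling estimator, the sampling rule is two-stage: draw a domain $i \sim p(\cdot;\tau)$ and then $x$ uniformly from $\mathcal{D}_i$. Using the tower property I would write the expectation as $\sum_{i=1}^{K} p(i;\tau)\,\frac{1}{|\mathcal{D}_i|}\sum_{x\in\mathcal{D}_i}\nabla\mathcal{L}(x)$, and again pull the gradient outside the finite sums to recognize this as $\nabla\mathcal{L}_{TS}(\tau) = g$. Since both conditional expectations equal the same $g$, both stochastic gradients are unbiased estimates of the total gradient.

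The computation itself is routine; the only point requiring care is the interchange of expectation and gradient, which here is justified trivially because all the expectations are finite sums over the (finite) dataset rather than genuine integrals, so no dominated-convergence argument is needed. The one conceptual step that does real work is the appeal to Theorem~\ref{theorem:1} to guarantee that the two population losses—and hence their gradients—are literally the same object, which is what lets a single total gradient $g$ serve as the unbiased target for both estimators simultaneously.
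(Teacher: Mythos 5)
Your proof is correct, and it reaches the conclusion by a slightly different logical route than the paper's. The paper's proof is a single chain of equalities at the gradient level: it re-runs the change-of-measure computation from Theorem~\ref{theorem:1} with $\nabla\mathcal{L}(x)$ substituted for $\mathcal{L}(x)$, showing directly that $\mathop{\mathbb{E}}_{x\sim\mathcal{D}}\left[w_{f(x)}\nabla\mathcal{L}(x)\right] = \mathbb{E}_{i\sim p(\cdot;\tau)}\left[\mathbb{E}_{x\sim\mathcal{D}_i}\left[\nabla\mathcal{L}(x)\right]\right]$, i.e., that the two estimators have the same mean. You instead keep Theorem~\ref{theorem:1} as a black box at the loss level, show each estimator is unbiased for the gradient of \emph{its own} population loss by commuting $\nabla$ with the finite sums, and then differentiate the identity $\mathcal{L}_{S}(\mathbf{w}_\tau)=\mathcal{L}_{TS}(\tau)$ to conclude that the two population gradients are the same object $g$. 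The mathematical content is identical (the Theorem~\ref{theorem:1} reweighting plus linearity), but your organization has two small advantages: it is a genuine corollary-style derivation that never repeats the change-of-measure algebra, and it pins down explicitly what ``the total gradient'' means --- the gradient of the common full-batch loss --- which the paper's chain of equalities leaves implicit, since equating the two means does not by itself name the common target. The paper's version, in turn, is more compact and never needs the gradient--expectation interchange as a separately stated step, because the re-grouping of the finite weighted sum does that work silently. Your remark that no dominated-convergence argument is needed (everything is a finite sum) is a correct and worthwhile observation that the paper also leaves unstated.
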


\begin{proof}

By definition, we have 
\begin{align*}
\mathbb{E}[\nabla \mathcal{L}_s(x; \mathbf{w}_\tau)] &= \mathop{\mathbb{E}}_{x \sim \mathcal{D}} \left[w_{f(x)} \nabla \mathcal{L}(x)\right] \\
&= \mathbb{E}_{\mathcal{D}_i \sim p(\cdot;\tau)} [\mathbb{E}_{x \sim \mathcal{D}_i} [\nabla \mathcal{L}(x)]] \\
&=\mathbb{E}[\nabla \mathcal{L}_{TS}(x; \tau)]
\end{align*}

\end{proof}

%\noindent We conclude that 
%\ 
%\begin{align*}
%\mathop{\mathbb{E}_{\mathcal{D}_i \sim p(\mathcal{D}_i;\tau)}\left[\mathbb{E}_{x \sim \mathcal{D}_i} \left[\nabla \mathcal{L}(x)\right]\right]} \\ = 
%\mathbb{E}_{x \sim \mathcal{D}} [w_{f(x)} \nabla \mathcal{L}(x)] &= \frac{1}{\mathcal{D}}\sum_{x \sim D}\nabla \mathcal{L}(x).
%\end{align*}
%\end{proof}

\begin{theorem}[Scalarization induces larger variance under Stochastic Gradient Descent]
\label{theorem:2}
 Using the same notation in Corollary 1.1, we have $\mathrm{Var}(\nabla \mathcal{L}_{S}(x; \mathbf{w}_\tau)) \geq \mathrm{Var}(\nabla \mathcal{L}_{TS}(x; \tau))$.
\end{theorem}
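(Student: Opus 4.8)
The plan is to reduce the comparison of variances to a comparison of second moments, then expand both second moments over domains and recognize the resulting gap as a chi-square–type divergence. Since Corollary 1.1 already establishes that $\nabla \mathcal{L}_{S}(x;\mathbf{w}_\tau)$ and $\nabla \mathcal{L}_{TS}(x;\tau)$ are unbiased estimators of the \emph{same} total gradient, the identity $\mathrm{Var}(g)=\mathbb{E}\|g\|^2-\|\mathbb{E}g\|^2$ (reading $\mathrm{Var}$ as the trace of the covariance matrix) cancels the mean terms, so it suffices to prove $\mathbb{E}\|\nabla \mathcal{L}_{S}(x;\mathbf{w}_\tau)\|^2 \ge \mathbb{E}\|\nabla \mathcal{L}_{TS}(x;\tau)\|^2$.

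First I would write the two second moments explicitly. Setting $N=|\mathcal{D}|=\sum_j|\mathcal{D}_j|$ and drawing $x$ uniformly for Scalarization gives $\mathbb{E}\|\nabla \mathcal{L}_{S}\|^2=\frac{1}{N}\sum_{i=1}^K w_i^2\sum_{x\in\mathcal{D}_i}\|\nabla\mathcal{L}(x)\|^2$, whereas the two-stage draw of Temperature Sampling gives $\mathbb{E}\|\nabla \mathcal{L}_{TS}\|^2=\sum_{i=1}^K \frac{p(i;\tau)}{|\mathcal{D}_i|}\sum_{x\in\mathcal{D}_i}\|\nabla\mathcal{L}(x)\|^2$. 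Substituting $w_i=p(i;\tau)/p(i;1)=p(i;\tau)N/|\mathcal{D}_i|$ from Theorem \ref{theorem:1} rewrites the Scalarization coefficient as $\frac{w_i^2}{N}=\frac{p(i;\tau)}{|\mathcal{D}_i|}\,w_i$, so the difference of second moments collapses to $\sum_{i=1}^K \frac{p(i;\tau)}{|\mathcal{D}_i|}\,(w_i-1)\sum_{x\in\mathcal{D}_i}\|\nabla\mathcal{L}(x)\|^2$.

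The final step is to show this quantity is nonnegative. Treating the per-example gradient second moments as homogeneous across examples (replacing $\|\nabla\mathcal{L}(x)\|^2$ by a common value $c$), each inner sum becomes $c|\mathcal{D}_i|$ and the difference telescopes to $c\big(\sum_i p(i;\tau)w_i-1\big)=c\big(\sum_i p(i;1)w_i^2-1\big)=c\,\chi^2\big(p(\cdot;\tau)\,\|\,p(\cdot;1)\big)\ge 0$, where the nonnegativity is exactly that the chi-square divergence between the temperature-$\tau$ and proportional sampling distributions is nonnegative (equivalently Cauchy--Schwarz/Jensen, using $\sum_i p(i;\tau)=\sum_i p(i;1)=1$). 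This also foreshadows Theorem \ref{theorem:3}: the gap scales with this chi-square divergence, which grows as $\tau$ pushes $p(\cdot;\tau)$ away from $p(\cdot;1)$ and as the domain distribution becomes more skewed.

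The main obstacle is precisely the sign of $(w_i-1)$, which is \emph{not} constant across domains: upsampled low-resource domains have $w_i>1$, while down-sampled high-resource domains have $w_i<1$, so the term-by-term comparison fails and the raw difference is an indefinite-sign combination. Consequently the inequality is not unconditional—if large per-example gradient norms happen to concentrate in the down-weighted domains, Scalarization can in fact exhibit the \emph{smaller} variance. The crux of the argument is therefore to justify (or assume) homogeneity of the per-example gradient magnitudes, reducing the gap to the manifestly nonnegative chi-square term; this is the one step I expect to require the most care and the clearest statement of hypotheses.
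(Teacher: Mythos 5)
Your proposal is correct and follows essentially the same route as the paper: it reduces the variance comparison to a second-moment comparison via unbiasedness, expands both second moments over domains, and rests on the nonnegativity of $\sum_{i} p(i;\tau)^2/p(i;1) - 1$, which is exactly the paper's Lemma 2.1 (proved there by Cauchy--Schwarz, equivalent to your $\chi^2$-divergence argument). The homogeneity-of-gradient-magnitudes caveat you flag is a sharp and fair observation: the paper's own proof makes precisely that assumption implicitly when it pulls $\nabla \mathcal{L}^2(x)$ out of the domain sum as a common factor, so your explicit statement of that hypothesis makes the argument, if anything, more honest than the original.
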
 

\noindent We defer the proof of Theorem 2 to Appendix \ref{proof_of_theorem}. \\

%\noindent \textit{Sketch of Proof~} Since $\nabla \mathcal{L}_{S}(x)$ and $\nabla \mathcal{L}_{TS}(x)$ are unbiased estimates of the total gradient, to prove $\mathrm{Var}(\nabla \mathcal{L}_{S}(x)) \geq \mathrm{Var}(\nabla \mathcal{L}_{TS}(x))$, it suffices to show that the expectation of the squared stochastic gradient under Scalarization is larger than that under Temperature Sampling: 

\paragraph{Implication of Theorem 2} Scalarization induces larger variance connects with the literature on variance reduction in stochastic gradient estimation \citep{pmlr-v28-sutskever13, KingBa15} accelerates the convergence of SGD \citep{Robbins1951ASA}. We thus hypothesize that Temperature Sampling will converge faster than Scalarization with the set of weights that makes them mathematically equivalent under \textit{full} gradient descent. 

\begin{theorem}[Scalarization induces larger variance when approximating higher temperatures]
\label{theorem:3}
The difference:
$$\Delta = \mathrm{Var}(\nabla \mathcal{L}_{S}(x; \mathbf{w}_\tau)) - \mathrm{Var}(\nabla \mathcal{L}_{TS}(x; \tau))$$ 
monotonically increases when $\tau \geq 1$.
\end{theorem}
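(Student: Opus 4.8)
The plan is to reduce the monotonicity claim to the sign of a single covariance. First I would record the closed form of $\Delta$ that already underlies Theorem~\ref{theorem:2}. Since Corollary~1.1 shows both estimators share the same mean, the $\lVert\mathbb{E}[\nabla\mathcal{L}]\rVert^2$ terms cancel in the difference of variances, leaving only the difference of second moments. Because $\nabla\mathcal{L}_{S}$ carries the factor $w_{f(x)}=p(i;\tau)/p(i;1)$ under uniform draws while $\nabla\mathcal{L}_{TS}$ does not, the per-domain second moments differ by exactly $w_i^2\,p(i;1)=p(i;\tau)^2/p(i;1)$ versus $p(i;\tau)$, giving
\begin{equation*}
\Delta(\tau)\;=\;\sum_{i=1}^{K}\Big(\tfrac{p(i;\tau)}{p(i;1)}-1\Big)\,p(i;\tau)\,S_i,
\qquad S_i:=\mathbb{E}_{x\sim\mathcal{D}_i}\big\lVert\nabla\mathcal{L}(x)\big\rVert^2 .
\end{equation*}

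Next I would differentiate. Writing $n_i=|\mathcal{D}_i|$, $\ell_i=\log n_i$, $q_i=p(i;1)$ and $p_i=p(i;\tau)=n_i^{1/\tau}/\sum_j n_j^{1/\tau}$, it is cleaner to substitute $\beta=1/\tau\in(0,1]$: since $\tfrac{d}{d\tau}=-\tfrac{1}{\tau^2}\tfrac{d}{d\beta}$, the claim ``$\Delta$ increases in $\tau$ for $\tau\ge 1$'' is equivalent to ``$\Delta$ decreases in $\beta$ on $(0,1]$.'' The softmax identity $\tfrac{\partial p_i}{\partial\beta}=p_i(\ell_i-\bar\ell)$, with $\bar\ell:=\sum_j p_j\ell_j$, then yields
\begin{equation*}
\frac{d\Delta}{d\beta}
=\sum_{i=1}^{K}S_i\,p_i(\ell_i-\bar\ell)\Big(\tfrac{2p_i}{q_i}-1\Big).
\end{equation*}

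The heart of the argument is the sign of this sum. In the regime where the within-domain second moments $S_i$ are comparable across domains—the same regime in which Theorem~\ref{theorem:2} is established—the $-1$ contribution sums to zero by $\sum_i p_i(\ell_i-\bar\ell)=0$, and the expression collapses to $2\,\mathrm{Cov}_p\!\big(p/q,\,\ell\big)$, a covariance under the distribution $p$. Since $p_i/q_i\propto e^{(\beta-1)\ell_i}$ is a \emph{decreasing} function of $\ell_i$ whenever $\beta\le 1$, while $\ell_i$ is trivially increasing in itself, Chebyshev's sum inequality (equivalently, the monotone-coupling inequality for a single variable) forces this covariance to be nonpositive. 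Hence $\tfrac{d\Delta}{d\beta}\le 0$, i.e. $\tfrac{d\Delta}{d\tau}\ge 0$ for $\tau\ge 1$, with equality only in the degenerate proportional case $\beta=1$; this is exactly the claimed monotonicity, and it also recovers $\Delta(1)=0$ at the boundary.

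I expect the main obstacle to be the gradient-norm weights $S_i$. When they are heterogeneous the clean covariance collapse fails: the factor $(\tfrac{2p_i}{q_i}-1)$ is not monotone in $\ell_i$, and a large $S_i$ sitting on a down-weighted high-resource domain can in principle flip the sign of an individual summand. The careful step is therefore to state the correlation/homogeneity condition on $\{S_i\}$ under which the monotone-coupling argument survives (the same hypothesis already in force for Theorem~\ref{theorem:2}), and to confirm that the boundary term at $\beta=1$ vanishes so that the monotonic regime begins precisely at $\tau=1$. The differentiation and the softmax identity are routine; pinning down the assumption on $\{S_i\}$ that makes the covariance sign robust is the real content.
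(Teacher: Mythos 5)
Your proposal is correct, and it reaches the paper's conclusion by a genuinely different route in the key step, while making explicit a hypothesis the paper leaves implicit. The paper's proof first reduces $\Delta$ to the scalar function $F(\tau)=\sum_{i} p(i;\tau)^2/p(i;1)$ (silently pulling the per-domain gradient second moments out as a common factor---exactly your homogeneity condition on the $S_i$), substitutes $x_i=|\mathcal{D}_i|^{1/\tau}$ to write $F(\tau)=\bigl(\sum_i x_i^{\tau}\bigr)\bigl(\sum_i x_i^{2-\tau}\bigr)/\bigl(\sum_i x_i\bigr)^2$, and differentiates in $\tau$: the logarithmic derivative is the bracket $\sum_i \frac{x_i^{\tau}\log x_i}{\sum_j x_j^{\tau}}-\sum_i \frac{x_i^{2-\tau}\log x_i}{\sum_j x_j^{2-\tau}}$, which vanishes at $\tau=1$ and whose own $\tau$-derivative is a sum of two variances of $\log x_i$ under tilted distributions, so the bracket is nonnegative for $\tau\geq 1$; combined with $F(\tau)\geq 1>0$ from Lemma~2.1 this gives $F'\geq 0$. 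You instead keep the weights $S_i=\mathbb{E}_{x\sim\mathcal{D}_i}\lVert\nabla\mathcal{L}(x)\rVert^2$ explicit, reparametrize $\beta=1/\tau$, use the softmax identity $\partial p_i/\partial\beta=p_i(\ell_i-\bar\ell)$, and reduce the sign of $d\Delta/d\beta$ to a single covariance $\mathrm{Cov}_p(p/q,\ell)\leq 0$, settled by Chebyshev's sum inequality because $p_i/q_i\propto e^{(\beta-1)\ell_i}$ is nonincreasing in $\ell_i$ when $\beta\leq 1$. The two arguments rest on the same exponential-family convexity facts (your covariance inequality is the integrated form of the paper's variance-of-$\log x$ computation), but yours is more economical---it needs neither Lemma~2.1 nor a second differentiation, and it cleanly exhibits the boundary value $\Delta(1)=0$. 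Your closing caveat is also well placed rather than a defect: the paper's proofs of Theorems~2 and~3 treat $\nabla\mathcal{L}^2(x)$ as domain-independent when factoring it out of the sums, so the ``comparable $S_i$'' assumption you isolate is a real hypothesis that both your argument and the paper's require; with strongly heterogeneous $S_i$ the sign of $\sum_i S_i\,p_i(\ell_i-\bar\ell)\bigl(\tfrac{2p_i}{q_i}-1\bigr)$ can indeed flip, so stating the condition explicitly is an improvement over the paper's presentation. One minor correction: equality in your covariance bound holds not only at $\beta=1$ but also when all domain sizes coincide (all $\ell_i$ equal), matching the paper's remark that strict monotonicity requires the $|\mathcal{D}_i|$ not all equal.
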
 

\noindent Figure \ref{fig:plot_distribution} illustrates the variance by increasing temperature and skewness of the distribution. We defer details on how we constructed Figure \ref{fig:plot_distribution} and the full proof of Theorem 3 to Appendix~\ref{proof_of_theorem_3}.\\

%\noindent \textit{Sketch of Proof~} From the proof of Theorem 2, we know that the difference in variance $\Delta$ can be quantified by $F(\tau) = \frac{p(\mathcal{D}_i; \tau)^2}{p(\mathcal{D}_i; 1)}$. We take the derivative of $F(\tau)$ with respect to $\tau$ and show that $F(\tau)$ is non-decreasing when $\tau \geq 1$.

\begin{figure}[ht]
\vspace{-0.7cm}
  \begin{center}
    \includegraphics[scale=0.48,trim=0cm 0.8cm 0cm 0.3cm]{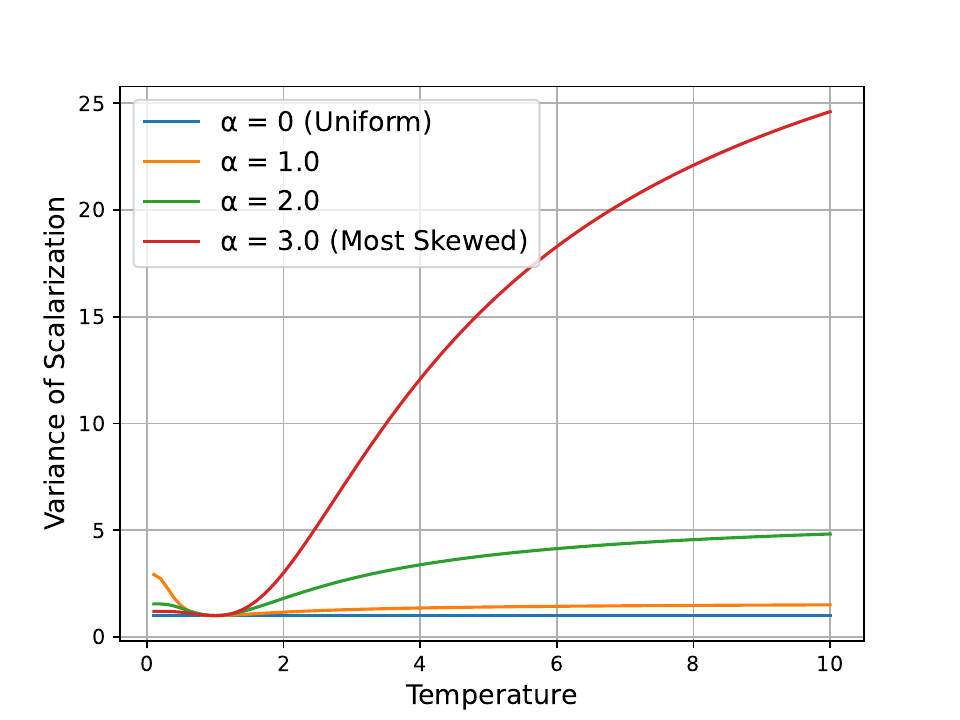}
  \end{center}
  \caption{Variance of Scalarization $\sum_i \frac{p(i; \tau)^2}{p(i; 1)}$ by sampling temperature $\tau$. A large temperature or a skewed distribution of $\mathcal{D}$ induces a much larger variance for Scalarization. Distributions $\mathcal{D}_i \propto \frac{1}{i^\alpha}$. See Appendix \ref{proof_of_theorem_3} for details of the experiment setup.}
  \vspace{-0.5cm}
  \label{fig:plot_distribution}
\end{figure}

\paragraph{Implication of Theorem 3} The fact that the induced variance of Scalarization increases as the approximated temperature increases implies that Temperature Sampling converges much faster than Scalarization at higher temperatures, which we empirically verify in the next section.

\subsection{Empirical Evidence}
\label{section3}

%\daniel{I think you need to tie the first few sentences to the previous section (build upon it).}
We directly validate our hypothesis that lower variance of Temperature Sampling accelerates convergence.\footnote{We apply the Adam \citep{KingBa15} optimizer with mini-batch gradient descent.} We train a multilingual machine translation model and vary the sampling temperature $\tau = \{2, 3, 5\}$. We then approximate Temperature Sampling by multiplying the Temperature Sampling probabilities by loss under proportional sampling ($\tau=1$). Specifically, we pair one high-resource direction in \texttt{En-}$\{$\texttt{Fr, Cs}$\}$ with the low-resource direction \texttt{En-Ro}. We report the statistics of the datasets we used in \S 3.2 at Table \ref{datasummary}, for all of our experiments, we learn a shared Byte-Pair Encoding tokenizer with a vocabulary size of 32k.\footnote{Additional experiments on \texttt{En-\{Zh, Ro\}} can be found in Appendix~\ref{zhroresults}.} 

\begin{table}[h]   
\centering 
\setlength\tabcolsep{12pt}
\begin{tabular}{@{}lcc@{}}
\toprule
Language Pairs      & Training & Validation \\ \midrule
WMT 15 En-Fr & 40M      & 4503       \\
WMT 22 En-Zh & 55M      & 3418       \\
WMT 22 En-Cs & 56M      & 2082       \\
WMT 16 En-Ro & 600K     & 1999       \\ \bottomrule
\end{tabular}
\caption{Dataset statistics for comparing Scalarization v.s. Temperature Sampling in \S \ref{section3}. First three language pairs \texttt{En-}$\{$\texttt{Fr,Zh,Cs}$\}$ are high-resource.}
\label{datasummary}
\vspace{-0.5cm}
\end{table}

\paragraph{Empirical Validation of Theorem 2} We first validate that Temperature Sampling has a lower variance in gradient estimation than scalarization, as predicted by Theorem 2. Figure~\ref{fig:gnorm distribution} illustrates the variance between mini-batch gradients for TS and scalarization, confirming that TS reduces gradient variance ($2.25 \rightarrow 0.62$).

% \vspace{-0.4cm}
\begin{figure}[ht]
  \begin{center}
    \includegraphics[scale=0.48,trim=0cm 0.5cm 0cm 0.9cm]{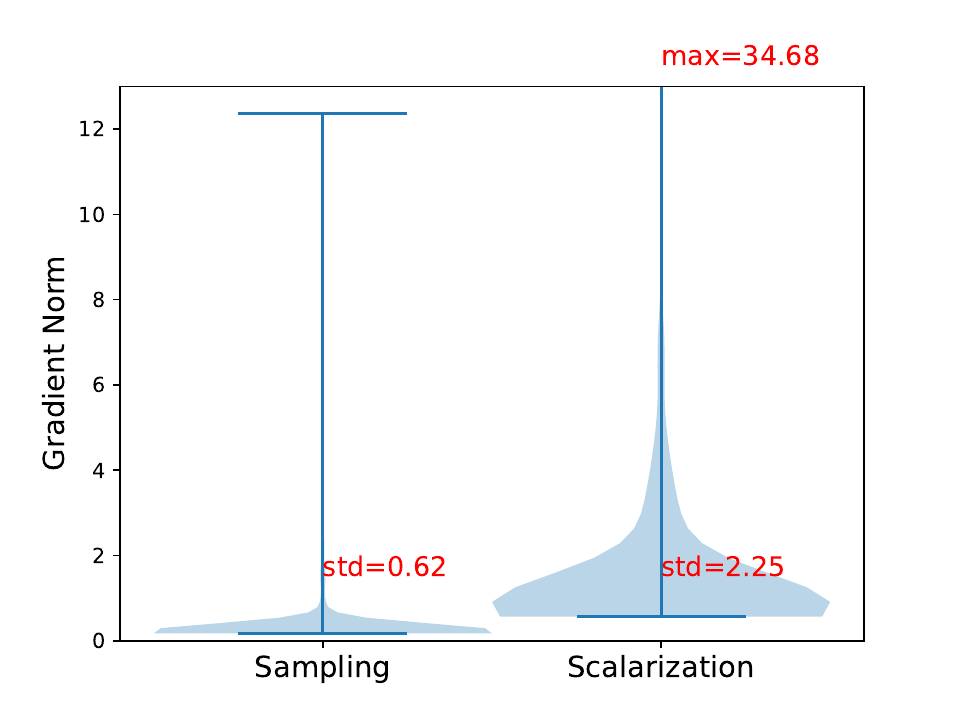}
  \end{center}
  \vspace{-5pt}
  \caption{The distribution of gradient norm between mini-batches on \texttt{En-\{Cs, Ro\}} for Temperature Sampling and Scalarization. \textbf{Scalarization induces a larger variance (2.25 $>$ 0.62) between mini-batch gradient norms compared to Temperature Sampling, as indicated by Theorem \ref{theorem:3}.}}
  \label{fig:gnorm distribution}
  \vspace{-0.2cm}
\end{figure}
%\tianjian{Add sentences bridging the previous section to the empirical results.}

\begin{figure*}

\centering
\includegraphics[scale=0.36]{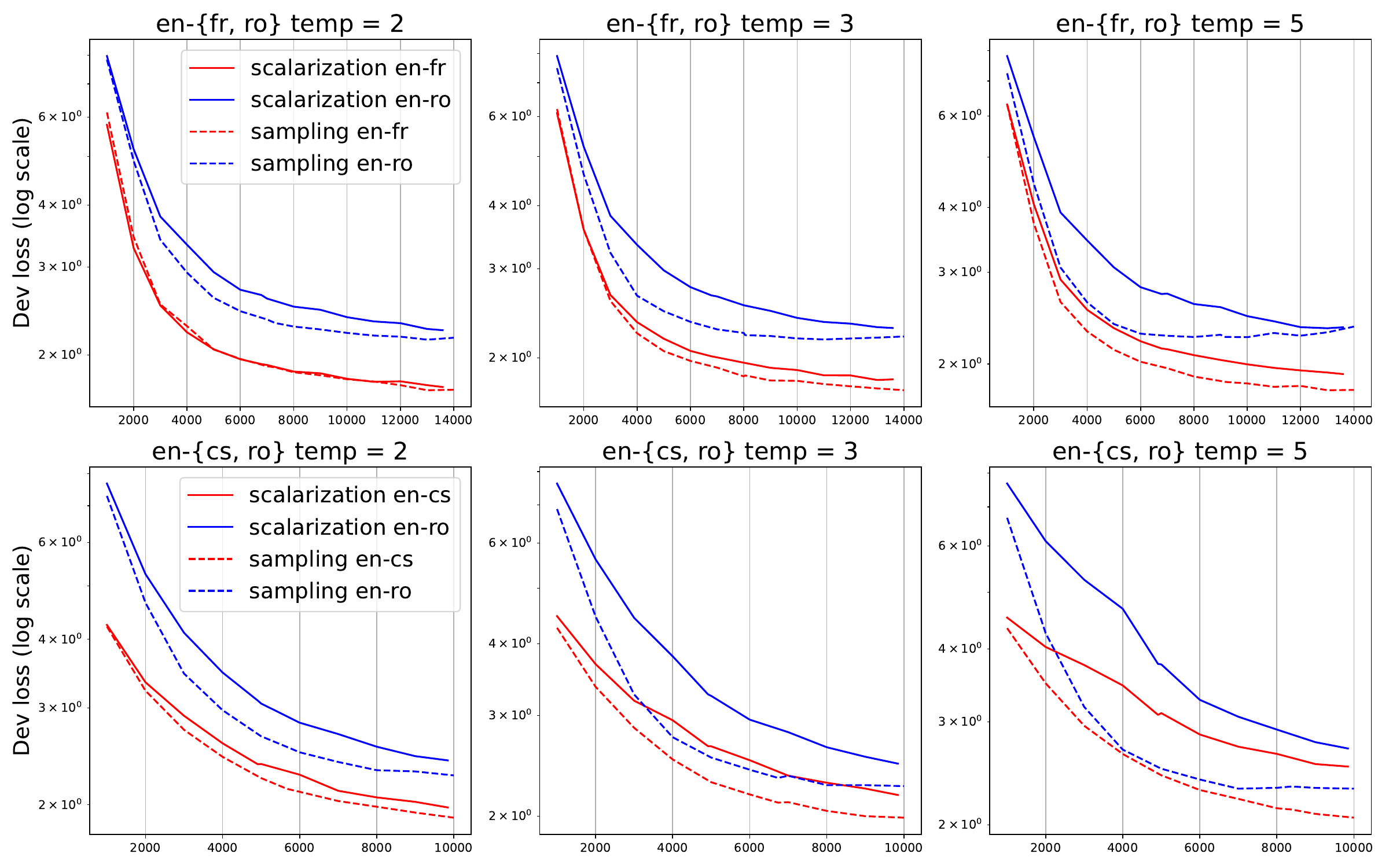}
    \caption{Validation loss by training iteration for \texttt{En-\{Cs, Ro\}} (first row) and \texttt{En-\{Fr, Ro\}} (second row). \textbf{Temperature Sampling (dashed) converges faster compared to Scalarization (solid), leading to better performance on both the \sethlcolor{matplotlibred}\hl{HRL} and the \sethlcolor{matplotlibblue!75}\hl{LRL}.}
    % \jack{if you have time, add shapes on each point on the line so that it is colorblind friendly}
    }
    \label{fig:s_vs_ts}
\end{figure*}

Next, we observe that this lower variance leads to faster convergence during training. Figure~\ref{fig:s_vs_ts} shows the validation loss curves over training iterations, where TS consistently converges faster than scalarization across all temperatures we experimented with. Notably, the larger the temperature, the greater the gap in convergence speed between TS and Scalarization. This suggests that when there is a significant mismatch in data sizes, using Temperature Sampling with a high temperature is beneficial. Intuitively, low-resource languages (LRLs) with very little data should be upsampled more aggressively to accelerate convergence. We summarize our findings below:

\paragraph{Large Temperature Sampling is prone to overfitting}
In our \texttt{En - \{Fr, Ro\}} experiments, we observed that the model overfits the low-resource direction (Ro) when using a large temperature ($\tau = 3, 5$). However, the high-resource direction has not yet converged when this overfitting occurs; therefore, continuing to train in both directions would lead to severe overfitting. This indicates that we need to pair strong regularization on the LRL (e.g. early stopping) with a large temperature, which motivates our temperature scheduling method \name, which uses a large temperature during the beginning to speed up training and then decreases the temperature to prevent overfitting on the LRL.

\paragraph{Temperature Sampling is equivalent to Scalarization given enough compute} We found that Temperature Sampling (dashed in Figure~\ref{fig:s_vs_ts}) always converges faster compared to weighting the losses of individual directions (solid in Figure~\ref{fig:s_vs_ts}), but they eventually converge to the same validation loss given enough training iterations, which corresponds to their equivalency under \textit{full} gradient descent (\autoref{theorem:1}). This means both Scalarization and Temperature Sampling can effectively balance multiple languages when given a large enough compute.

\section{\name: Balanced Training for Heavily Imbalanced Datasets}
\label{section:experiments}
Based on the theoretical analysis in \S \ref{theoretical} and the empirical results in \S \ref{section3}, we conclude that \textbf{Temperature Sampling with a large temperature converges faster than Scalarization with equivalent weights but is more prone to overfitting on the LRL when using a large temperature.} We thus hypothesize that we can employ a large temperature during the beginning of training to speed up convergence and then decrease the temperature to prevent overfitting on low-resource directions. 

\paragraph{Our proposed method:} We design a simple temperature scheduling method: \name~which starts with a high temperature (aggressive upsampling of LRLs) and then lowers the temperature to $\tau=1$ (proportional sampling) at a fixed iteration to prevent overfitting. We describe our experiment setup at \S\ref{subsec:setup}, report results at \S\ref{subsec:main:results}, and discuss our findings at \S\ref{subsec:findings:ablations}.

\subsection{Setup}
\label{subsec:setup}

\paragraph{Models, Datasets, and Hyper-parameters} We experiment on two setups: \textbf{multilingual machine translation} and \textbf{multilingual language modeling}, both suffering severe mismatch in dataset sizes. For our machine translation experiments, we use the standard encoder-decoder Transformer~\citep{Vaswani+2017} architecture implemented in fairseq \citep{ott2019fairseq}. We select 8 distinct languages from the opus-100 dataset \citep{zhang-etal-2020-improving} and train a one-to-many translation where the source language is English, we used a shared BPE tokenizer with 64k vocabulary. Detailed Languages and their respective sizes can be found in Table \ref{table:mt}. For our multilingual language modeling experiments, we use a decoder-only Transformer model from Huggingface \citep{wolf-etal-2020-transformers} and select 4 linguistically diverse languages with varying amounts of data from the mC4 \citep{xue-etal-2021-mt5} dataset. The statistics are in Table \ref{table:mc4_stats}. We used the mT5 tokenizer \cite{xue-etal-2021-mt5} for our experiments on mC4.

\begin{table}[h]
\centering
\setlength\tabcolsep{12pt}
\begin{tabular}{@{}lcc@{}}
\toprule
Languages    & Tokens & \multicolumn{1}{c}{\begin{tabular}[c]{@{}c@{}} mT5 weights\\ $\tau$ = 3.33\end{tabular}} \\ \midrule
EN - English & 2733B  & 5.67\%                                                                                       \\
IT - Italian & 162B   & 2.43\%                                                                                       \\
ZH - Chinese & 39B    & 1.67\%                                                                                       \\
SW - Swahili & 1B     & 0.5\%                                                                                        \\ \bottomrule
\end{tabular}
\caption{Dataset statistics of our selected subset of C4. mT5 \citep{xue-etal-2021-mt5} effectively uses an sampling temperature of $\tau = 3.33$ to oversample LRLs.}
\label{table:mc4_stats}
\end{table}

 For our machine translation experiments, we use $\tau=5$ for the first 30k of training iterations and $\tau=1$ for the 
second 30k. For our language modeling experiments, we use $\tau = 5$ for the first 50k training iterations and $\tau = 1$ for the second 50k. Detailed hyper-parameters are in Appendix \ref{appendix:hparams}. 

\paragraph{Baselines} We experiment on baselines that apply a fixed temperature throughout training (static temperature) and baselines that adjust the temperature during training (dynamic temperature). For static temperature, we vary the sampling temperature in $\tau = 1$ (Proportional Sampling), $\tau = 5$ and $\tau = 100$ ($\sim$Uniform Sampling). For dynamic temperature, we compare with \textbf{Unimax} \citep{chung2023unimax}, which first heavily upsamples low-resource languages and removes them after the low-resource dataset has been seen by the model for a fixed amount of repetitions, and \textbf{Order Matters} \citep{choi2023order} which first only trains on high-resource languages, and only adds in low-resource languages to the end of training. Additionally, we include the results of \textbf{DoReMi} \citep{xie2023doremi}, which trains small proxy models that minimize the loss of a worse performing set of domains iteratively to find optimal sampling probabilities of each domain for training a large model.

%\textcolor{red}{There is an implementation detail mismatch: fairseq upsamples low-resource domains, and huggingface downsamples high-resource domains. \citet{xie2023doremi} performs loss reweighting (Scalarization). }
%\tianjian{Update: DoReMi \citep{xie2023doremi} mentions weighted sampling in the paper but applies loss on the weights to perform dense DRO updates.} See \href{https://github.com/sangmichaelxie/doremi/blob/main/doremi/trainer.py#L363}{link}
\subsection{Main Results}
\label{subsec:main:results}

\begin{table*}[ht]
%\scriptsize
\centering
\setlength\tabcolsep{5pt} % default value: 6pt
\small
\begin{tabular}{@{}lccccccccc|cc@{}}
\toprule
\texttt{en}-$\{\}$             & \texttt{es}   & \texttt{fa}   & \texttt{ga}   & \texttt{gl}   & \texttt{ha}   & \texttt{hi}   & \texttt{it}   & \texttt{kk}   & \texttt{ug}   & HRLs  & LRLs           \\ 
\# of Parallel Sentences &1M	&1M	&294K&	519K	&102K	&538K&	1M	&83K	&76K & $>$1M & $<$500K \\ \midrule
\multicolumn{12}{@{}l@{}}{\textbf{Static Temperature Sampling}} \\ \midrule
$\tau=1$ (Proportional)             & \textbf{38.9} & \sethlcolor{green!20}\hl{13.1} & \sethlcolor{red!20}\hl{58}   & \textbf{28.9} & \sethlcolor{red!20}\hl{41.9} & \sethlcolor{green!20}\hl{17.1} & \textbf{32.8} & \sethlcolor{red!20}\hl{22.4} & \sethlcolor{green!20}\hl{10.8} & \textbf{28.3}  & \sethlcolor{red!20}\hl{30.4}  \\
$\tau=5$            & \sethlcolor{red!20}\hl{36.9} & 12.2 & \textbf{60.9} & \sethlcolor{green!20}\hl{28.3} & 46.3 & \sethlcolor{green!20}\hl{16.8} & \sethlcolor{red!20}\hl{30.7} & \sethlcolor{green!20}\hl{26.8} & \sethlcolor{red!20}\hl{9.4}  & \sethlcolor{red!20}\hl{26.6}  & \textbf{32.4}        \\ 
$\tau = 100$ ($\sim$Uniform) & \sethlcolor{red!20}\hl{36.1} & 12.3 & 59.6 &   \sethlcolor{red!20}\hl{27.3} & \sethlcolor{green!20}\hl{46.4} & \textbf{17.3} & \sethlcolor{red!20}\hl{30.3} & \textbf{27.6} & \sethlcolor{red!20}{9.1} & \sethlcolor{red!20}\hl{26.2} &  \sethlcolor{green!20}\hl{31.4} \\ \midrule
\multicolumn{12}{@{}l@{}}{\textbf{Dynamic Temperature Sampling}} \\ \midrule
\textrm{Unimax \citep{chung2023unimax}}         & 37.1 & 12.2 & \sethlcolor{green!20}\hl{60.7} & \sethlcolor{green!20}\hl{28.4} & \sethlcolor{red!20}\hl{45.9} & \sethlcolor{green!20}\hl{16.5} & \sethlcolor{red!20}\hl{30.8} & 26.2 & \sethlcolor{red!20}\hl{9.4}  & \sethlcolor{red!20}\hl{26.7}               & \sethlcolor{green!20}\hl{32.1}  \\
\textrm{Order Matters \citep{choi2023order}}          & 37.1 & 12.2 & \textbf{60.9} & \sethlcolor{green!20}\hl{28.2} & 46.1 & \sethlcolor{green!20}\hl{16.7} & \sethlcolor{red!20}\hl{30.8} & \sethlcolor{green!20}\hl{26.7} & \sethlcolor{red!20}\hl{9.4}  & \sethlcolor{red!20}\hl{26.7}      & \sethlcolor{green!20}\hl{32.3}         \\ 
\sethlcolor{LightCyan}\hl{\name~(Ours)} & \sethlcolor{green!20}\hl{38.7} & \textbf{13.2} & \sethlcolor{green!20}\hl{60.1}   & \sethlcolor{green!20}\hl{28.7} & \textbf{47.4} & \textbf{17.3}   & \sethlcolor{green!20}\hl{32.2} & \sethlcolor{red!20}\hl{26} & \textbf{11} &  \sethlcolor{green!20}\hl{28.1}       & \textbf{32.4}        \\ \midrule 
\multicolumn{12}{@{}l@{}}{\textbf{With Proxy Model Training}} \\
\midrule 
DoReMi \citep{xie2023doremi}      & 37.3 & \sethlcolor{green!20}\hl{13.1} & \sethlcolor{green!20}\hl{60.4}   & \sethlcolor{green!20}\hl{28.4} & 46.3 & \sethlcolor{green!20}\hl{17.1} & \sethlcolor{red!20}\hl{29.8} & \sethlcolor{red!20}\hl{23.0} & \sethlcolor{green!20}\hl{10.8} & \sethlcolor{red!20}\hl{26.7}  & \textbf{32.4}  \\
\bottomrule
\end{tabular}
\caption{SacreBLEU scores (higher is better) on a chosen subset of OPUS-100 with a mixture of high (1M), mid (500K - 1M), and low ($<$500K) resource languages. The best performance is \textbf{bolded}. Scores that are close (within 1 BLEU) of the best performance are colored in \sethlcolor{green!20}\hl{green}. Scores lower than the best for more than 1.5 BLEU are highlighted in \sethlcolor{red!20}\hl{red}. \textbf{\sethlcolor{LightCyan}\hl{\name{}} outperforms various static and dynamic Temperature Sampling methods,} by improving the performance on LRLs without sacrificing much performance on HRLs.}
\label{table:mt}
\end{table*}

\paragraph{Machine Translation} Table \ref{table:mt} shows applying \name~on training a multilingual machine translation model. Compared to proportional sampling ($\tau=1$), \name~is able to greatly improve the mid and low-resource languages (+0.7 and 3.1 BLEU) while minimally sacrificing the performance of high-resource languages (-0.2 BLEU). Compared to static up-sampling $\tau = 5$, \name~matches the performance of mid- and low-resource languages while improving the performance of high-resource languages by 1.5 BLEU. Our method also outperforms the performance of \textit{Unimax} and \textit{Order Matters} scheduling while being easier to implement. Furthermore, \name~is able to match the performance of \textit{DoReMi} without having to train multiple proxy models. 

\paragraph{Multilingual Language Modeling} We also experimented with the general language modeling task on multiple languages on selected languages on the multilingual C4 (mC4) dataset \citep{xue-etal-2021-mt5}. We report the validation loss in table \ref{table:mc4_results}. Our results echo the findings in our machine translation experiments: \name~matches the performance on the only HRL English (EN) but outperforms other baselines on all three other languages.

\begin{table*}[]
\setlength\tabcolsep{9pt}
\centering
\label{tab:perplexity}
\begin{tabular}{@{}lcccc@{}}
\toprule
\textbf{Method} & \textbf{EN 2733B} & \textbf{IT 162B} & \textbf{ZH 39B} & \textbf{SW 1B} \\ 
\midrule 
\multicolumn{5}{@{}l@{}}{\textbf{Static Temperature Sampling}} \\ \midrule
$\tau=1$ (Proportional) & \textbf{2.67} & 3.10 & 4.23 & 4.09 \\ 
$\tau=5$ & 2.91 & 3.01 & 3.14 & 3.12 \\
$\tau=100$ ($\sim$Uniform) & 3.02 & 2.88 & 2.76 & 3.01 \\ \midrule
\multicolumn{5}{@{}l@{}}{\textbf{Dynamic Temperature Sampling}} \\ \midrule
Order Matters \citep{choi2023order} & 2.77 & 2.76 & 3.23 & 3.26 \\ 
Unimax \citep{chung2023unimax} & 2.98 & 2.91 & 2.85 & 3.06 \\ 
\sethlcolor{LightCyan}\hl{\name~(Ours)} & 2.75 & \textbf{2.63} & 2.56 & 2.94 \\ 
\midrule 
\multicolumn{5}{@{}l@{}}{\textbf{With Proxy Model Training}} \\ \midrule 
DoReMi \citep{xie2023doremi} & 2.89 & \textbf{2.63} & \textbf{2.51} & \textbf{2.89} \\ 
\bottomrule 
\end{tabular}
\caption{Dev loss on selected mC4 subset (lower is better). \textbf{\sethlcolor{LightCyan}\hl{\name}~achieves the best performance on all three LRLs (IT, ZH, and SW).} On the only HRL English, \name~is only behind proportional sampling ($\tau$=1), which is heavily optimized towards performance on English by sacrificing performance on all other languages.}
\label{table:mc4_results}
\end{table*}

%\begin{figure}[h]
%\centering
%\begin{subfigure}[b]{0.45\textwidth}
%    \centering
%    \includegraphics[width=\textwidth]{valid_results.pdf}
%    \caption{Comparison between validation loss of static (dashed) and dynamic weight %sampling (solid) by training iterations}
%    \label{fig:dynamic outperforms static}
%\end{subfigure}
%\centering
%\begin{subfigure}[b]{0.45\textwidth}
%    \centering
%    \includegraphics[width=\textwidth]{valid_results_temp=5-1_25000.pdf}
%    \caption{Comparison between an increasing (solid) Temperature Sampling schedule %and a decreasing (dashed) schedule.}
%    \label{fig:enter-label}
%\end{subfigure}
%\caption{Validation losses for different Temperature Sampling methods}
%\label{fig:increasing_vs_decreasing}
%\end{figure}

\subsection{Study on Temperature Schedules}
\label{subsec:findings:ablations}

In this section, we revisit two design choices in dynamic Temperature Sampling: 
1) Increasing v.s. Decreasing the temperature during training, and 2) Dense v.s. Sparse updates of the temperatures. We highlight our results below:

\begin{figure}[ht]
    \centering
    % \small
    \includegraphics[width=0.5\textwidth,trim=0cm 0.3cm 0cm 0cm]{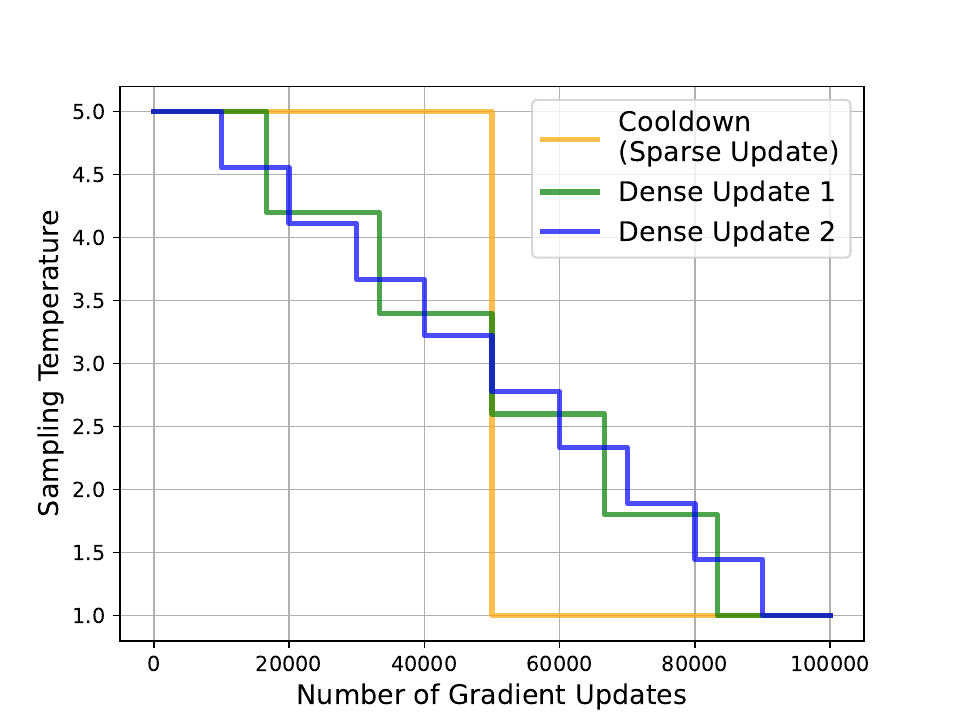}
    \caption{Sampling temperature schedules.}
    \label{fig:schedules}
    
    \vspace{0.5cm} % Adds some space between the figure and the table
    
    \setlength\tabcolsep{6pt} % Default table padding
    \renewcommand{\arraystretch}{1.2} % Adjusts row spacing
    \begin{tabular}{lcccc}
        \toprule
        Method & EN  & IT  & ZH  & SW  \\
        \midrule
        \rowcolor{orange!20} \textbf{\name}  & \textbf{2.59} & \textbf{2.63} & \textbf{2.56} & \textbf{2.94} \\
        \rowcolor{green!20} Dense Update 1 & 3.17 & 2.89 & 2.85 & 3.11 \\
        \rowcolor{blue!15} Dense Update 2 & 3.31 & 3.02 & 3.35 & 3.16 \\
        \bottomrule
    \end{tabular}
    
    \caption{Validation loss on mC4 subset (lower is better) for different update schedules in Figure \ref{fig:schedules}. Sparse updates perform better than dense updates.}
    \label{tab:mc4_dense_vs_sparse}
\end{figure}

\paragraph{Increasing the temperature is better than decreasing the temperature.} \textit{Unimax} \citep{chung2023unimax} upsamples the LRLs during the \textbf{beginning} of training while \textit{Order Matters} \citep{choi2023order} upsamples LRLs at the \textbf{end} of training. To resolve this conflict, we compare an increasing temperature schedule (1 for the first 15k training iterations and 5 for the second 15k training iterations; ``1-5") with a decreasing temperature schedule (5 for the first 15k iterations and 1 for the second half; ``5-1"), using the same \texttt{En-\{Zh, Ro\}} machine translation data in \S \ref{section3}. Figure~\ref{fig:enter-label} illustrates that a decreasing schedule (dashed) converges faster and results in better performance on the \sethlcolor{matplotlibblue!75}\hl{LRL} compared to an increasing schedule (solid), with minimal sacrifice on the \sethlcolor{matplotlibred!90}\hl{HRL}. This means that upsampling the LRLs during the beginning of training performs better.

\paragraph{Curse of Granularity} We concluded that a decreasing schedule generally leads to faster convergence and better overall performance. Furthermore, we compare various fine-grained decreasing schedules that perform dense decreasing of the sampling temperature with our sparse update (5 for the first 50k, 1 for the second 50k training iterations) using the subset in mC4 described in Table \ref{table:mc4_stats}. Figure~\ref{fig:schedules} illustrates the schedules we compare, and \ref{tab:mc4_dense_vs_sparse} shows the validation loss of each decreasing schedule. Fine-grained online reweighting method yields worse performance, echoing the findings in \citet{fan2024doge} and \citet{xie2023doremi}.\footnote{Both \citet{xie2023doremi} and \citet{fan2024doge} approximate sampling probabilities using Scalarization when performing dense updates, which could also be the reason why dense update underperforms in their experiments.}

\begin{figure}[htbp]
  % \vspace{-20pt}
  \centering
\includegraphics[width=0.505\textwidth,trim=0cm 0.3cm 0cm 0cm]{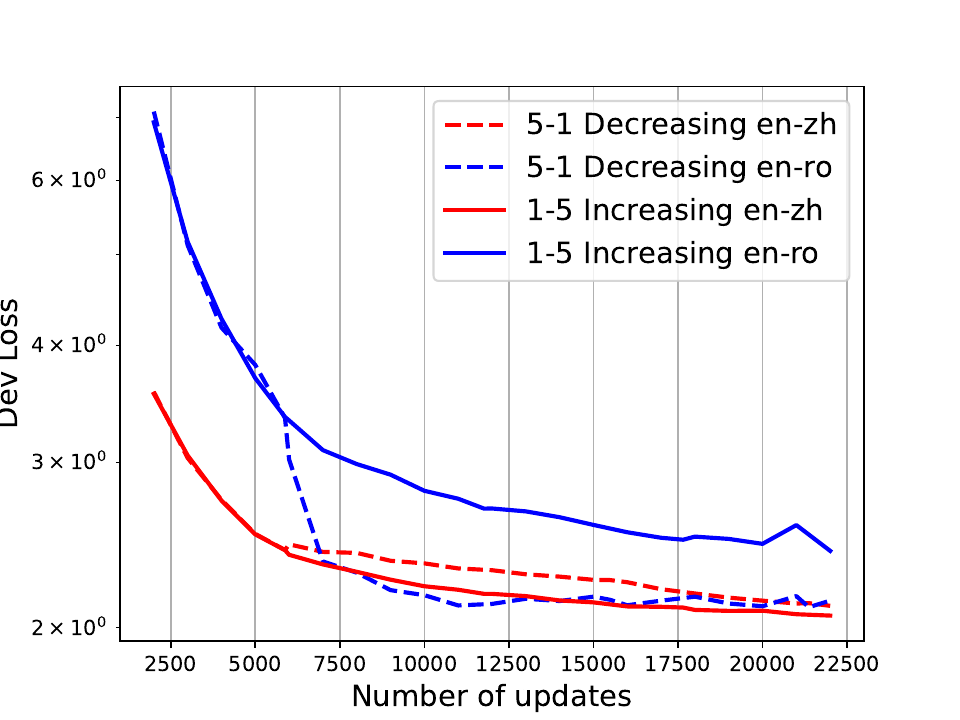}
  \caption{Comparison between an increasing (solid) Temperature Sampling schedule and a decreasing (dashed) schedule in multilingual machine translation \texttt{En-\{Zh, Ro\}}. \textbf{A decreasing temperature schedule outperforms an increasing one on the \sethlcolor{matplotlibblue!75}\hl{LRL} with minimal sacrifice on the \sethlcolor{matplotlibred!90}\hl{HRL}.}}
  \label{fig:increasing_vs_decreasing}
  \vspace{-15pt}
\end{figure}

\section{Related Works}

%\daniel{CITE: https://arxiv.org/pdf/2201.01212 also notice the papers that cite this work.}

\paragraph{Gradient-based methods for Multi-Task Learning} Training a multi-domain language model can be seen as Multi-Task Learning (MTL; \citet{caruana1997mtl}), where each domain is a single task. Gradient-based methods aim to reduce the discrepancy in directions of conflicting gradients in different tasks: \textit{PCGrad} \citep{NEURIPS2020_3fe78a8a} aims to project the gradient of a task onto the orthogonal plane of the gradients of the other task. Another line of work \citep{10.5555/3524938.3525864, wang-etal-2020-balancing, kreutzer-etal-2021-bandits-dont} uses gradient similarity as the reward to train a policy that decides the sampling probabilities for each domain in a Reinforcement Learning setting. \citet{fan2024doge} utilizes gradient similarity between domains to design a temperature schedule for balancing multiple domains in training language models. However, recent studies point out that such gradient-based techniques do not yield significant improvement compared to a weighted sum of individual task losses (Scalarization) \citep{kurin2022in, NEURIPS2022_580c4ec4, NEURIPS2023_368559ed}. Our results also echo the findings of  \citet{zhai2023understanding}, where they find loss reweighting (Scalarization) underperforms standard training. Our work provides a possible explanation that Scalarization induces a larger variance in gradients.

\paragraph{Loss-based methods for Multi-Task Learning} Another line of work utilizes the loss instead of the gradients per task for optimizing MTL models. An intuitive method is to put more weight on the task with the highest loss. In statistical learning, Distributionally Robust Optimization (DRO) methods \citep{BenTal2011RobustSO, 10.1214/20-AOS2004, pmlr-v80-hashimoto18a, Sagawa*2020Distributionally} minimize the loss of the worst-performing subgroup to balance performance. \citet{oren-etal-2019-distributionally} and \citet{xie2023doremi} apply DRO to multi-domain language modeling to minimize the loss of a set of worse-performing domains. Similarly, \citet{zhou-etal-2021-distributionally} applies DRO to multilingual machine translation by minimizing the loss of a set of worse-performing translation directions. \name~can be seen as an efficient approximation of DRO methods by upsampling the worse-performing LRL and shifting the focus to the HRL once the LRL is sufficiently trained. Unlike DRO,  \name~do does not require training proxy models \citep{pmlr-v139-liu21f} and dense updates on domain weights. Our findings also connect to the fact that different languages act like regularizers in multi-task learning \cite{li-murray-2023-zero}.

We defer additional related works on addressing multilingual imbalance and the discussion between class imbalance and domain imbalance to Appendix \ref{appendix:related}.

\paragraph{Scaling Laws for Domain Mixture} The search space for the optimal domain weights at any given training iteration is combinatorically large. Existing works conduct comprehensive experiments on smaller scaled models to learn how the training and generalization error varies according to dataset sizes and domain weights --- ``scaling laws" of domain mixture \cite{ye2024data, ge2024bimixbivariatedatamixing, jiang2024adaptivedataoptimizationdynamic}. Closer to our work, \citet{chen2023on} fits scaling laws for sampling temperature for multilingual machine translation. Concurrent to our work, \citet{he2024scalinglawsmultilinguallanguage} fits scaling laws for multilingual language modeling. However, as \citet{jiang2024adaptivedataoptimizationdynamic} denotes: ``the optimal data policy for a smaller model
does not necessarily generalize to larger models.'' 

\section{Conclusion}
We examined two common balancing methods for multi-domain language modeling with data imbalances: Scalarization and Temperature Sampling. Although both yield the same loss, Temperature Sampling converges faster but risks overfitting. To mitigate this, we propose \name, a variant that adjusts temperatures during training to maintain fast convergence while reducing overfitting.
% \clearpage

\section*{Limitations}
\label{section:limitations}
We discuss the limitations of our study here.

\paragraph{Impact of data mixture on downstream performance}  Studies have pointed out that the data mixtures in different domains impact downstream performance \citep{gururangan-etal-2020-dont, albalak2023efficient, fan2024doge}. Our work only focuses on the impact of different temperature schedules on pre-training validation performance. Although the effect of optimizing pre-training mixtures across different languages on downstream performances has not been fully concluded, works have shown that a lower pre-train validation loss generally leads to better downstream performance \citep{xie2023doremi, du2024understanding}. 
    
\paragraph{Difference between multi-lingual and multi-domain language modeling} Existing work on mono-lingual language modeling \citep{JMLR:v24:22-1144, pmlr-v162-du22c, xie2023doremi, oren-etal-2019-distributionally, fan2024doge, longpre2023pretrainers} maps all data from the same source (e.g. Wikipedia, Web, Books) to a single domain. Such a mapping ignores the subdomains within each source. In our work, we focused on the multilingual setup because (a) There exists a severe data size mismatch and (b) there is a clear and natural definition of ``domain" --- the different languages, and we expect the sampling to have a larger impact because of this mismatch. Even though we only conducted experiments on 
    a multilingual setup, our theoretical analysis applies to all setups with heavy dataset size mismatches. 
    
\paragraph{Finding the optimal temperature schedule} It requires a large amount of compute to thoroughly study the scaling laws of pre-training language models under different temperatures \citep{ye2024data} and to search for the optimal static sampling temperature $\tau$ for a given dataset \citep{chen2023on}, let along dynamic temperature scheduling. The optimal temperature depends not just on the size of the dataset but also on the ``difficulty" of the dataset.  Therefore, existing research \citep{chen2023on, xie2023doremi, oren-etal-2019-distributionally, zhou-etal-2021-distributionally, liu2024regmix, dubey2024llama3herdmodels} relies on training a proxy model on the dataset probe domain difficulty to determine optimal weights. Our work, instead, proposes a heuristic that decreases the temperature during training that does not rely on any proxy training, but we did not exhaustively test all the decreasing schedules. We leave finding optimal temperature schedules without preliminary training for future work.
% \end{enumerate}

\section*{Ethical Considerations}
One application of our study is to balance high- and low-resource languages. 
% \daniel{The intended goal of this study is to  address ethical concerns in multilingual language modeling, particularly the need for equitable representation of low-resource languages. }
 We aim to mitigate biases that favor high-resource languages. However, this approach also raises risks, such as amplifying existing biases in limited and potentially skewed low-resource corpora. Furthermore, improved models could be misused to spread misinformation or infringe on privacy, especially in communities less equipped to counter such impacts. Thus, while \name~ promotes linguistic diversity, it requires careful monitoring to ensure it is used ethically.

% \daniel{
% Anything to say here?
% For example, making NLP/AI more equitable by better handling of tail distribution of low-resource languages? 
% }

\section*{Acknowledgements}
This work is supported by ONR grant (N00014-24-1-2089) and a gift from Allen Institute for AI. 
We are grateful to Nicholas Lourie and Jingyu Zhang for their insightful feedback throughout this project. 
We also thank the anonymous reviewers for their valuable feedback on our earlier draft. The GPUs were provided by the DSAI cluster.
%GPU machines for conducting experiments were provided by the DSAI cluster.

% Bibliography entries for the entire Anthology, followed by custom entries
%\bibliography{anthology,custom}
% Custom bibliography entries only
\bibliography{custom}

\clearpage

\appendix 
\onecolumn

\begin{center}
% \onecolumn
{\Large \textbf{Supplemental Material}}
% \twocolumn
\end{center}

\section{Additional Related Works}
\label{appendix:related}

\paragraph{Multilingual Interference} Finding out the reasons and solutions for negative interference in Multilingual Neural Machine Translation \citep{johnson-etal-2017-googles, aharoni-etal-2019-massively} has been an active research area for the past decade. Yet, while previous studies \citep{wang2021gradient} find that negative interference mainly occurs between different language families, recent studies \citep{shaham-etal-2023-causes} have demonstrated that negative inference does not happen between languages of different families. The interference emerges because of the mismatch in the amount of data for different translation directions. Real-world translation data suffers from a heavy mismatch of data size in different directions, ranging from less than 100K to over 100M \citep{nllbteam2022language}. In our work, we show that this heavy mismatch in data size results in low-resource languages being under-trained. 

To mitigate interference caused by dataset sizes, \citet{aharoni-etal-2019-massively} and \citet{xue-etal-2021-mt5} propose to up-sample low-resource languages, which often results in the model overfitting on the LRLs while underfitting HRLs. \citet{huang-etal-2022-unifying} proposes to distill the model from earlier checkpoints with the LRLs that have not overfit with the current model to regularize the training of LRLs. \citet{huang-etal-2023-towards} proposes to distill between a model trained with a low sampling temperature and a model trained with a high sampling temperature. \textit{Unimax} \citep{chung2023unimax} proposes to first uniformly sample from all languages until an LRL dataset has been seen by the model for a fixed amount of repetitions; then, we remove the LRL from training. \textit{Order-Matters} \citep{choi2023order} proposes the opposite of \textit{Unimax} \citep{chung2023unimax}, to first only train on the HRL and add in the LRL after a fixed iteration. Our work shows that the \textit{Unimax} style of decreasing the temperature works better and proposes a simple alternative that does not require tracking how many times a model has seen an individual LRL dataset.

\paragraph{Class Imbalance v.s. Domain Imbalance} Class imbalance aims to address when the input $x$ is drawn from the same distribution $x \sim \mathcal{D}$ but the output labels $y$ are imbalanced. Domain imbalance, on the other hand, studies the problem when the input is drawn from different distributions with mismatched sizes $x \sim \{\mathcal{D}_1, \mathcal{D}_2, ... \mathcal{D}_{k}\}$, making no assumptions about the output labels $y$. Therefore, our study is distantly connected to adjusting the sampling probabilities to address class imbalance \cite{buda2017imbalance}. We refer readers to \citet{henning-etal-2023-survey} for a comprehensive survey of class imbalance in natural language processing.

\section{Proof of Theorem 2}
\label{proof_of_theorem}

\textbf{Theorem 2~}(Scalarization induces larger variance under Stochastic Gradient Descent)
\textit{
 Using the same notation in Corollary 1.1, we have $\mathrm{Var}(\nabla \mathcal{L}_{S}(x; \mathbf{w}_{\tau})) \geq \mathrm{Var}(\nabla \mathcal{L}_{TS}(x; \tau))$.} \\
 
\noindent We first proof a required lemma:

\noindent \textbf{Lemma 2.1}
Let $\mathcal{D} = \{\mathcal{D}_1, ..., \mathcal{D}_K\}$, $\forall j \in \{1, ... ,K\}$, let $p(j, \tau) = \frac{\mathcal{D}_j^\frac{1}{\tau}}{\sum_{k} \mathcal{D}_k^\frac{1}{\tau}}$, then $\sum_{i=1}^K \frac{p(i ; \tau)^2}{p(i; 1)} \geq  1$
holds true when $\forall j, \mathcal{D}_j \geq 0$.

\begin{proof}
We substitute $x_j = \mathcal{D}_j^\frac{1}{\tau}$, then:
$$
\sum_{i=1}^K \frac{p(i; \tau)^2}{p(i; 1)} = \frac{\sum_{i} x_i^{\tau}}{(\sum_i x_i)^2}\cdot\left(\sum_i x_i^{2-\tau}\right) $$
By the Cauchy-Schwartz inequality, we have:
$$
\left(\sum_i x_i^{\tau/2}\cdot x_i^{({2-\tau})/2}\right)^2 \leq \left(\sum_i x_i^\tau\right)\left(\sum_i x_i^{2-\tau}\right),
$$
which simplifies to:
$$
\left(\sum_i x_i\right)^2 \leq \left(\sum_i x_i^\tau\right)\left(\sum_i x_i^{2-\tau}\right),
$$
which implies that:
$$
1 \leq \frac{\sum_{i} x_i^{\tau}}{(\sum_i x_i)^2}\cdot\left(\sum_i x_i^{2-\tau}\right) = \sum_{i=1}^K \frac{p(i; \tau)^2}{p(i; 1)}.
$$
\end{proof}

\noindent Armed with Lemma 2.1, we come back to the proof of Theorem 2.

\begin{proof} Since $\nabla \mathcal{L}_{S}(x; \mathbf{w}_\tau)$ and $\nabla \mathcal{L}_{TS}(x; \tau)$ are unbiased estimates of the total gradient, we only need to show that the expectation of the \emph{squared} gradient is larger for the stochastic gradient under Scalarization.

\begin{align*}
\underbrace{\mathop{\mathbb{E}}_{x \sim \mathcal{D}} [(w_{f(x)} \nabla \mathcal{L}(x))^2]}_\text{Scalarization} &= \sum_{i=1}^K p(i;1) w^2_{f(x)} \nabla \mathcal{L}^2(x) \\
\sum_{i=1}^K \frac{p(i ; \tau)^2}{p(i; 1)}\nabla \mathcal{L}^2(x) 
& \geq \nabla \mathcal{L}^2(x) \sum_{i=1}^K p(i; \tau) \nabla \mathcal{L}^2(x) \\ 
 &  = \underbrace{\mathop{\mathbb{E}}_{\mathcal{D}_i \sim p(i;\tau)} \left[ \mathop{\mathbb{E}}_{x \sim \mathcal{D}_i} [\nabla \mathcal{L}^2(x)] \right]}_\text{Temperature Sampling}.
\end{align*}
\end{proof}

\section{Proof of Theorem 3 and Construction of Figure 2}
\label{proof_of_theorem_3}

\textbf{Theorem 3~}(Scalarization induces larger variance when approximating higher temperatures)

\textit{The difference in variance $\Delta = \mathrm{Var}(\nabla \mathcal{L}_{S}(x)) - \mathrm{Var}(\nabla \mathcal{L}_{TS}(x))$ is non-decreasing when $\tau \geq 1$.} \\

\begin{proof}
From the proof of Theorem 2, we know that the difference in variance $\Delta$ can be quantified by $\frac{p(\mathcal{D}_i; \tau)^2}{p(\mathcal{D}_i; 1)}$.
We substitute $x_j = \mathcal{D}_j^\frac{1}{\tau}$, then:
$$
\sum_{i=1}^K \frac{p(i; \tau)^2}{p(i; 1)} = \frac{\sum_{i} x_i^{\tau}}{(\sum_i x_i)^2}\cdot\left(\sum_i x_i^{2-\tau}\right). 
$$
Let $F(\tau) = \frac{\sum_{i} x_i^{\tau}}{(\sum_i x_i)^2}\cdot\left(\sum_i x_i^{2-\tau}\right)$ be a function of $\tau$. Taking its  derivative with respect to $\tau$: 
\begin{align*}
\frac{d F(\tau)}{d \tau} =  F(\tau) \times  \left[\left(\sum_i \frac{x_i^\tau \log x_i}{\sum_j x_j^\tau}\right) - \left(\sum_i \frac{x_i^{2-\tau} \log x_i}{\sum_j x_j^{2-\tau}}\right)\right].
\end{align*}

\noindent Taking the derivative of the term $\sum_{i} \frac{x_i^\tau \log x_i}{\sum_j x_j^\tau}$ with respect to $\tau$, we get:

$$\frac{d}{d \tau}\sum_{i} \left(\frac{x_i^\tau \log x_i}{\sum_j x_j^\tau}\right) = \frac{\sum_{j} x_j^\tau \sum_{i} x_i^\tau (\log x_i)^2 - (\sum_i x_i^\tau \log x_i)^2}{(\sum_{i} x_i^\tau)^2},$$ which is the variance of $\log x_i$ under the probability distributions $\frac{x^\tau_i}{\sum_{i} x^\tau_i}$. Similarly, the derivative of $\sum_i \frac{x_i^{2-\tau} \log x_i}{\sum_j x_j^{2-\tau}}$ with respect to $\tau$ is the negative variance of $\log x_i$ under distribution $\frac{x_i^{2-\tau}}{\sum_i x_i^{2-\tau}}$. Since variances are always non-negative, we conclude that the following difference:

$$
\left[\left(\sum_i \frac{x_i^\tau \log x_i}{\sum_j x_j^\tau}\right) - \left(\sum_i \frac{x_i^{2-\tau} \log x_i}{\sum_j x_j^{2-\tau}}\right)\right],
$$
is always non-negative.
%\daniel{The last sentence is not clear to me.}
By Lemma 2.1, we know that $F(\tau) \geq 1$. Therefore, the derivative $\frac{d F(\tau)}{d \tau}$ is always non-negative when $\tau \geq 1$, meaning that $F(\tau)$ is non-decreasing when $\tau \geq 1$. \\

\noindent Furthermore, when $\tau$ is strictly larger than 1 and not all $\mathcal{D}$ are equal, $F(\tau)$ monotonically increases with $\tau$. Showing that approximating a larger temperature using Scalarization induces a larger variance than approximating smaller temperatures. 
\end{proof}

\paragraph{Construction of Figure 2} 
We plot the function $F(\tau) = \sum_i \frac{p(i; \tau)^2}{p(D_i; 1)}$ against $\tau$ by starting with a uniform distribution and progressively increasing its skewness to resemble Zipf distributions. Specifically, we generate distributions $D_i \propto \frac{1}{i^\alpha}$ for various exponents $\alpha$ (ranging from 0 to higher values), where $i$ denotes the rank of each element. For each distribution, we compute the normalized probabilities $p(i; \tau) = \frac{\mathcal|{D}_i|^{1/\tau}}{\sum_j |\mathcal{D}_j|^{1/\tau}}$ across a range of $\tau$ values. This approach allows us to analyze how increasing the skewness of the distribution influences the behavior of $F(\tau)$ as a function of $\tau$.
\clearpage

\section{Detailed Hyper-Parameters}

We provide a comprehensive list of the hyper-parameters we used in this appendix section: \S \ref{section3} - Table \ref{s3}.
\label{appendix:hparams}

\begin{table*}[ht]
\centering
\small
\begin{tabular}{cccc}
\toprule
\textbf{Hyper-parameter} & \textbf{Value} & \textbf{Hyper-parameter} & \textbf{Value} \\
\midrule
Arch & wmt\_en\_de\_big & Label smoothing & 0.1 \\ 
\hline
Optimizer & adam & Adam epsilon & 1e-06 \\
\hline
Adam betas & "(0.9, 0.98)" & Learning rate scheduler & inverse\_sqrt \\
\hline
Learning rate & 0.0005 & Warmup updates & 4000 \\
\hline
Validate interval updates & 1000 & Dropout & 0.1 \\
\hline
Attention dropout & 0.1 & Weight decay & 0.0 \\
\hline
Max tokens & 32768 & Update frequency & 8 \\
\hline
Max source positions & 256 & Max target positions & 256 \\
\bottomrule
\end{tabular}
\caption{Detailed Hyper-parameters for experiments in \S \ref{section3}. We use the fairseq \citep{ott2019fairseq} implementation.}
\label{s3}
\end{table*}

\begin{table*}[ht]
\centering
\small
\begin{tabular}{cccc}
\toprule
\textbf{Hyper-parameter} & \textbf{Value} & \textbf{Hyper-parameter} & \textbf{Value} \\
\midrule
Arch & \textcolor{red}{iwslt\_de\_en} & Label smoothing & 0.1 \\
\hline
Optimizer & adam & Adam epsilon & 1e-06 \\
\hline
Adam betas & "(0.9, 0.98)" & Learning rate scheduler & inverse\_sqrt \\
\hline
Learning rate & 0.0005 & Warmup updates & 4000 \\
\hline
Validate interval updates & 1000 & Dropout & 0.1 \\
\hline
Attention dropout & 0.1 & Weight decay & 0.0 \\
\hline
Max tokens & \textcolor{red}{16384} & Update frequency & \textcolor{red}{4} \\
\hline
Max source positions & 256 & Max target positions & 256 \\
\bottomrule
\end{tabular}
\caption{Detailed Hyper-parameters for Machine Translation experiments in \S \ref{section:experiments} on our selected subset of opus-100 \citep{zhang-etal-2020-improving}. We use the fairseq \citep{ott2019fairseq} implementation. Differences against Table \ref{s3} are in \textcolor{red}{red}.}
\label{s4}
\end{table*}

\clearpage

\section{Additional Results on Scalarization V.S. Temperature Sampling}
\label{zhroresults}
\begin{figure*}[h]
    \centering
    \includegraphics[scale=0.35]{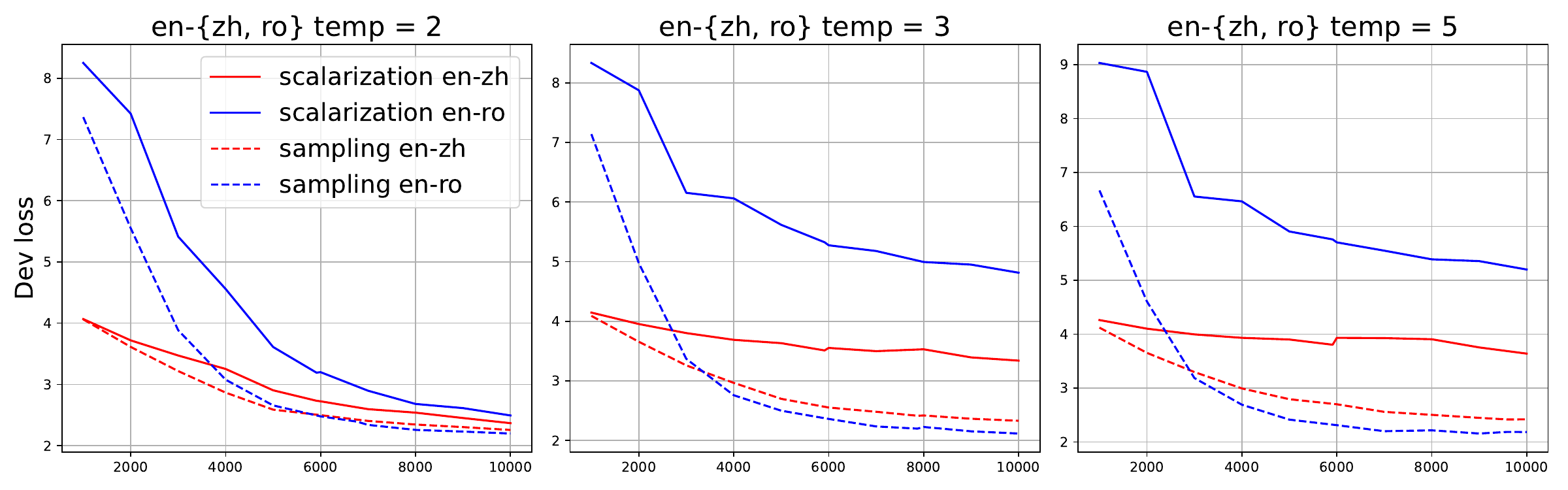}
    \caption{Validation loss by training iteration for \texttt{En-\{Zh, Ro\}}. Temperature Sampling (Dashed) converges much faster than Scalarization (Solid), especially at higher temperatures.}
    \label{fig:enter-label}
\end{figure*}

\begin{figure*}[htbp]
    \centering
    \begin{subfigure}{0.45\textwidth}
        \centering
        \includegraphics[width=\linewidth]{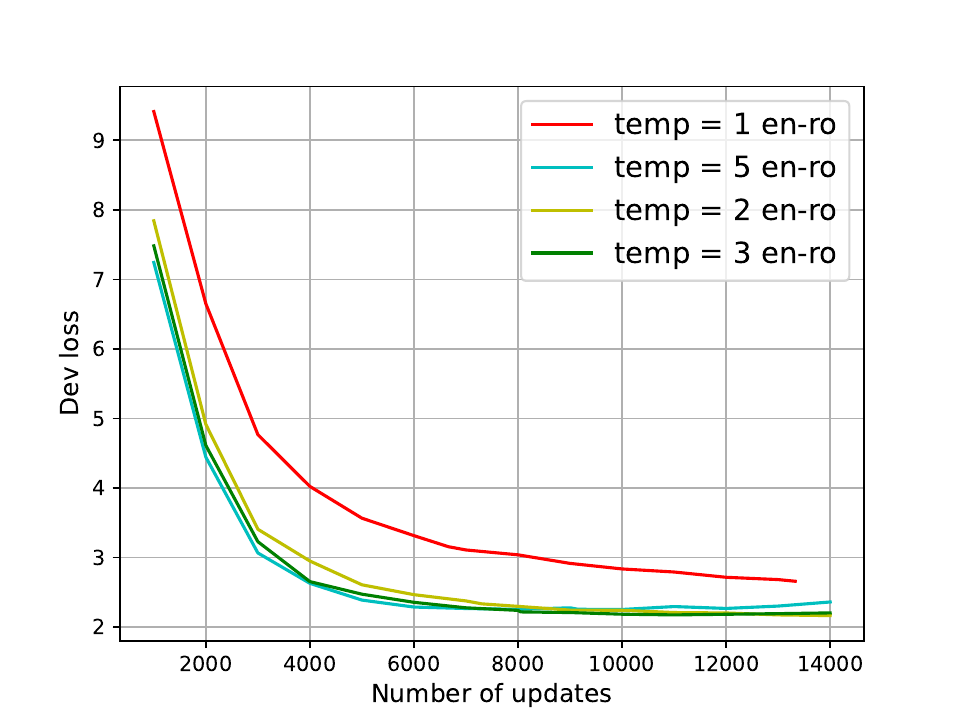}
        \caption{validation loss: low resource languages}
        \label{fig:lrl}
    \end{subfigure}%
    \begin{subfigure}{0.45\textwidth}
        \centering
    \includegraphics[width=\linewidth]{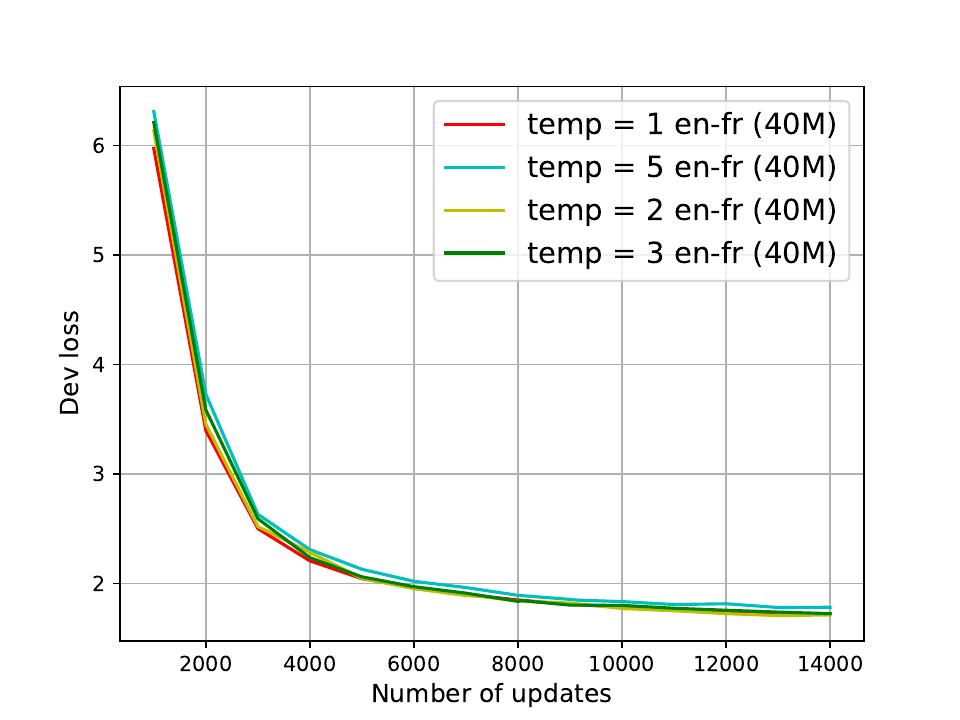}
    \caption{validation loss: high resource languages}
    \end{subfigure}
    \caption{Validation loss by gradient updates on the low-resource and high-resource language (left) jointly trained on the same model. Adjusting the sampling temperature has little impact on the high-resource language but a high impact on the low-resource language.}
    \label{fig:granularity}
\end{figure*}

\end{document}